\title{Local Stability and Performance of \\ Simple Gradient Penalty $\mu$-Wasserstein GAN}
\author{Cheolhyeong Kim, Seungtae Park, Hyung Ju Hwang \\
POSTECH\\
Pohang, 37673, Republic of Korea \\
\texttt{\{tyty4,swash21,hjhwang\}@postech.ac.kr}
}
\newtheorem{theorem}{Theorem}
\newtheorem{lemma}{Lemma}
\newtheorem{Assumption}{Assumption}
\newtheorem{definition}{Definition}
\newcommand{\norm}[1]{\left\lVert#1\right\rVert}
\begin{document}
\maketitle

\begin{abstract}
Wasserstein GAN(WGAN) is a model that minimizes the Wasserstein distance between a data distribution and sample distribution. Recent studies have proposed stabilizing the training process for the WGAN and implementing the Lipschitz constraint. In this study, we prove the local stability of optimizing the simple gradient penalty $\mu$-WGAN(SGP $\mu$-WGAN) under suitable assumptions regarding the equilibrium and penalty measure $\mu$. The measure valued differentiation concept is employed to deal with the derivative of the penalty terms, which is helpful for handling abstract singular measures with lower dimensional support. Based on this analysis, we claim that penalizing the data manifold or sample manifold is the key to regularizing the original WGAN with a gradient penalty. Experimental results obtained with unintuitive penalty measures that satisfy our assumptions are also provided to support our theoretical results.
\end{abstract}

\section{Introduction}
Deep generative models reached a turning point after generative adversarial networks (GANs) were proposed by \cite{14GAN}. GANs are capable of modeling data with complex structures. For example, DCGAN can sample realistic images using a convolutional neural network (CNN) structure\citep{DCGAN}. GANs have been implemented in many applications in the field of computer vision with good results, such as super-resolution, image translation, and text-to-image generation\citep{DBLP:conf/cvpr/LedigTHCCAATTWS17, DBLP:conf/cvpr/IsolaZZE17, DBLP:conf/iccv/ZhangXL17, DBLP:conf/icml/ReedAYLSL16}.

However, despite these successes, GANs are affected by training instability and mode collapse problems. GANs often fail to converge, which can result in unrealistic fake samples. Furthermore, even if GANs successfully synthesize realistic data, the fake samples exhibit little variability. This problem is due to Jensen--Shannon divergence and the low dimensionality of the data manifold.

A common solution to this problem is injecting an instance noise and finding different divergences. The injection of instance noise into real and fake samples during the training procedure was proposed by \cite{SonNoise}, where its positive impact on the low dimensional support for the data distribution was shown to be a regularizing factor based on the Wasserstein distance, as demonstrated analytically by \cite{DBLP:journals/corr/ArjovskyB17}. In $f$-GAN, $f$-divergence between the target and generator distributions was suggested which generalizes the divergence between two distributions\citep{FGAN}. In addition, a gradient penalty term which is related with Sobolev IPM(Integral Probability Metric) between data distribution and sample distribution was suggested by \cite{SOBOLEVGAN}.

The Wasserstein GAN (WGAN) is known to resolve the problems of generic GANs by selecting the Wasserstein distance as the divergence\citep{WGAN}. However, WGAN often fails with simple examples because the Lipschitz constraint on discriminator is rarely achieved during the optimization process and weight clipping. Thus, mimicking the Lipschitz constraint on the discriminator by using a gradient penalty was proposed by \cite{WGANGP}.

Noise injection and regularizing with a gradient penalty appear to be equivalent. The addition of instance noise in $f$-GAN can be approximated to adding a zero centered gradient penalty\citep{ROTH}. Thus, regularizing GAN with a simple gradient penalty term was suggested by \cite{18Mescheder} who provided a proof of its stability.

Based on a theoretical analysis of the convergence, \cite{STABILITY_GAN} proved the local exponential stability of the gradient-based optimization dynamics in GANs by treating the simultaneous gradient descent algorithm with a dynamic system approach. These previous studies were useful because they showed that the local behavior of GANs can be explained using dynamic system tools and the related Jacobian's eigenvalues. An alternative gradient descent algorithm and the optimal step size for discrete updating were also studied by \cite{18Mescheder}.

In this study, we aim to prove the convergence property of the simple gradient penalty $\mu$-Wasserstein GAN(SGP $\mu$-WGAN) dynamic system under general gradient penalty measures $\mu$. To the best of our knowledge, our study is the first theoretical approach to GAN stability analysis which deals with abstract singular penalty measure. In addition, measure valued differentiation\citep{Heidergott2008} is applied to take the derivative on the integral with a parametric measure, which is helpful for handling an abstract measure and its integral in our proof.

The main contributions of this study are as follows.
\begin{itemize}
\item We prove the regularized effect and local stability of the dynamic system for a general penalty measure under suitable assumptions. The assumptions are written as both a tractable strong version and intractable weak version. To prove the main theorem, we also introduce the measure valued differentiation concept to handle the parametric measure.
\item Based on the proof of the stability, we explain the reason for the success of previous penalty measures. We claim that the support of a penalty measure will be strongly related to the stability, where the weight on the limiting penalty measure might affect the speed of convergence.
\item We experimentally examined the general convergence results by applying two test penalty measures to several examples. The proposed test measures are unintuitive but they still satisfy the assumptions and similar convergence results were obtained in the experiment.
\end{itemize}

\section{Preliminaries}
First, we introduce our notations and basic measure-theoretic concepts. Second, we define our SGP $\mu$-WGAN optimization problem and treat this problem as a continuous dynamic system. Preliminary measure theoretic concepts are required to justify that the dynamic system changes in a sufficiently smooth manner as the parameter changes, so it is possible to use linearization theorem. They are also important for dealing with the parametric measure and its derivative. The problem setting with a simple gradient term is also discussed. The squared gradient size and simple gradient penalty term are used to build a differentiable dynamic system and to apply soft regularization as a resolving constraint, respectively. The continuous dynamic system approach, which is a so-called ODE method, is used to analyze the GAN optimization problem with the simultaneous gradient descent algorithm, as described by \cite{STABILITY_GAN}.

\subsection{Notations and Preliminaries Regarding Measure Theory}
$D(x;\psi) : \mathcal{X}\rightarrow\mathbb{R}$ is a discriminator function with its parameter $\psi$ and $G(z;\theta) : \mathcal{Z}\rightarrow\mathcal{X}$ is a generator function with its parameter $\theta$. $p_d$ is the distribution of real data and $p_g=p_\theta$ is the distribution of the generated samples in $\mathcal{X}$, which is induced from the generator function $G(z;\theta)$ and a known initial distribution $p_{latent}(z)$ in the latent space $\mathcal{Z}$. $\norm{\cdot}$ denotes the $L^2$ Euclidean norm if no special subscript is present. 

The concept of weak convergence for finite measures is used to ensure the continuity of the integral term over the measure in the dynamic system, which must be checked before applying the theorems related to stability. Throughout this study, we assume that the measures in the sample space are all finite and bounded. 

\begin{definition}
For a set of finite measures $\{\mu_i\}_{i\in\mathcal{I}}$ in the metric space $(\mathcal{X},d)$ with metric $d$ and Borel $\sigma$-algebra $\mathcal{B}(\mathcal{X})$, $\{\mu_i\}_{i\in\mathcal{I}}$ is referred to as bounded if there exists some $M>0$ such that for all $i \in \mathcal{I}$,
$$\mu_i(\mathcal{X}) \leq M$$
\end{definition}

For instance, $M$ can be set as 1 if $\{\mu_i\}$ are probability measures on $\mathbb{R}^n$. Assuming that the penalty measures are bounded, Portmanteau theorem offers the equivalent definition of the weak convergence for finite measures. This definition is important for ensuring that the integrals over $p_\theta$ and $\mu$ in the dynamic system change continuously.

\begin{definition}(Portmanteau Theorem)
For a bounded sequence of finite measures $\{\mu_n\}_{n\in\mathbb{N}}$ on the Euclidean space $\mathbb{R}^n$ with a $\sigma$-field of Borel subsets $\mathcal{B}(\mathbb{R}^n)$, $\mu_n$ converges weakly to $\mu$ if and only if for every continuous bounded function $\phi$ on $\mathbb{R}^n$, its integrals with respect to $\mu_n$ converge to $\int \phi d\mu$, i.e.,
$$\mu_n\rightarrow\mu \Longleftrightarrow \int \phi d\mu_n \rightarrow \int\phi d\mu$$
\end{definition}

The most challenging problem in our analysis with the general penalty measure is taking the derivative of the integral, where the measure depends on the variable that we want to differentiate. If our penalty measure is either absolutely continuous or discrete, then it is easy to deal with the integral. However, in the case of singular penalty measure, dealing with the integral term is not an easy task. Therefore, we introduce the concept of a weak derivative of a probability measure in the following\citep{Heidergott2008}. The weak derivative of a measure is useful for handling a parametric measure that is not absolutely continuous with low dimensional support.

\begin{definition}(Weak Derivatives of a Probability Measure)
Consider the Euclidean space and its $\sigma$-field of Borel subsets $(\mathbb{R}^d,\mathcal{B}(\mathbb{R}^d))$. The probability measure $P_\theta$ is called weakly differentiable at $\theta$ if a signed finite measure $P'_\theta$ exists where 
$$\frac{d}{d\theta}\int \phi(x)dP_\theta = \lim_{\Delta \rightarrow 0}\frac{1}{\Delta}\{\int \phi(x)dP_{\theta+\Delta} - \int \phi(x)dP_{\theta}\} = \int \phi(x)dP'_{\theta}$$
is satisfied for every continuous bounded function $\phi$ on $\mathbb{R}^n$. For the multidimensional parameter $\theta$, this can be defined similar manner.
\end{definition}
We can show that the positive part and negative part of $P'_\theta$ have the same mass by putting $\phi(x)=1$ and the Hahn--Jordan decomposition on $P'_\theta$. Therefore, the following triple $(c_\theta, P^+_\theta, P^-_\theta)$ is called a weak derivative of $P_\theta$, where $P^\pm_\theta$ are probability measures and $P'_\theta$ is rewritten as:
$$P'_\theta = c_\theta P^+_\theta - c_\theta P^-_\theta$$
Therefore,
$$\frac{d}{d\theta}\int \phi(x)dP_\theta = \int \phi(x)dP'_{\theta} = c_\theta(\int \phi(x)dP^+_\theta - \int\phi(x)dP^-_\theta)$$
holds for every continuous bounded function $\phi$ on $\mathbb{R}^n$. It is known that the representation of $(c_\theta ,P^+_\theta,P^-_\theta)$ for $P'_\theta$ is not unique because $(c_\theta + C_\theta,P^+_\theta + q_\theta,P^-_\theta + q_\theta)$ is also another representation of $P'_\theta$.\\

For the general finite measure $Q_\theta$, a normalizing coefficient $M(\theta)<\infty$ can be introduced. The product rule for differentiating can also be applied in a similar manner to calculus.
$$\frac{d}{d\theta}\int \phi(x;\theta)dP_\theta = \int \nabla_\theta \phi(x;\theta)dP_\theta + \int \phi(x;\theta)dP'_\theta$$
Therefore, for the general finite measure $Q_\theta = M(\theta)P_\theta$, its derivative $Q'_\theta$ can be represented as below.
$$Q'_\theta = M'(\theta)P_\theta + M(\theta)P'_\theta = M'(\theta)P_\theta + c_\theta M(\theta)P^+_\theta - c_\theta M(\theta)P^-_\theta$$

\subsection{Problem Setting as a Dynamic System}
Previous work of \cite{18Mescheder} showed that the dynamic system of WGAN-GP is not necessarily stable at equilibrium by demonstrating that the sequence of parameters is not Cauchy sequence. This is mainly due to the term $\norm{x}$ in the dynamic system which has a derivative $\frac{x}{\norm{x}}$ that is not defined at $x=0$. WGAN-GP has a penalty term $\mathbb{E}_{\mu_{GP}}[(\norm{\nabla_x D(x;\psi)}-1)^2]$ that can lead to a discontinuity in its dynamic system.

These problems can be avoided by using the squared value of the gradient's norm $\norm{\nabla_x D}^2$, which is a differentiable function. In contrast to the WGAN-GP, recent methods based on a gradient penalty such as the simple gradient penalty employed by \cite{18Mescheder} and the Sobolev GAN used the average of the squared values for the penalty area, whereas the WGAN-GP penalizes the size of the discriminator's gradient $\norm{\nabla_x D}$ away from 1 in a pointwise manner.

This advantage of squared gradient term\footnote{In this study, we prefer to use the expectation notation on the finite measure, which can be understood as follows. Suppose that $\mu_{\psi,\theta} = M(\psi,\theta)\bar{\mu}_{\psi,\theta}$ where $\bar{\mu}_{\psi,\theta}$ is normalized to the probability measure. Then, $\mathbb{E}_{\mu_{\psi,\theta}}[\norm{\nabla_x D}^2] = \mathbb{E}_{\bar{\mu}_{\psi,\theta}}[M(\psi,\theta)\norm{\nabla_x D}^2] = \int \norm{\nabla_x D}^2M(\psi,\theta)d\bar{\mu}_{\psi,\theta}(x) = \int \norm{\nabla_x D}^2d\mu_{\psi,\theta}(x)$}, $\mathbb{E}_\mu[\norm{\nabla_x D}^2]$, makes the dynamic system differentiable and we define the WGAN problem with the square of the gradient's norm as a simple gradient penalty. This simple gradient penalty can be treated as soft regularization based on the size of the discriminator's gradient, especially in case where $\mu$ is the probability measure \citep{ROTH}. It is convenient to determine whether the system is stable by observing the spectrum of the Jacobian matrix. In the following, $(D(x;\psi),p_d,p_\theta,\mu)$ is defined as an SGP $\mu$-WGAN optimization problem (SGP-form) with a simple gradient penalty term on the penalty measure $\mu$. 

\begin{definition}
The WGAN optimization problem with a simple gradient penalty term $\norm{\nabla_x D}^2$, penalty measure $\mu$, and penalty weight hyperparameter $\rho>0$ is given as follows, where the penalty term is only introduced to update the discriminator.
\begin{align*}
\max_\psi &: \mathbb{E}_{p_d}[D(x;\psi)] - \mathbb{E}_{p_\theta}[D(x;\psi)] - \frac{\rho}{2}\mathbb{E}_\mu[\norm{\nabla_x D(x;\psi)}^2] \\
\min_\theta &: \mathbb{E}_{p_d}[D(x;\psi)] - \mathbb{E}_{p_\theta}[D(x;\psi)]
\end{align*}
\end{definition}
According to \cite{STABILITY_GAN} and many other optimization problem studies, the simultaneous gradient descent algorithm for GAN updating can be viewed as an autonomous dynamic system of discriminator parameters and generator parameters, which we denote as $\psi$ and $\theta$. As a result, the related dynamic system is given as follows. 
\begin{align*}
\dot{\psi} &= \mathbb{E}_{p_d}[\nabla_\psi D] - \mathbb{E}_{p_\theta}[\nabla_\psi D] - \frac{\rho}{2}\nabla_\psi\mathbb{E}_\mu[\nabla_x^T D\nabla_x D] \\
\dot{\theta} &= \nabla_\theta\mathbb{E}_{p_\theta}[D]
\end{align*}


\section{Toy Examples}
We investigate two examples considered in previous studies by \cite{18Mescheder} and \cite{STABILITY_GAN}. We then generalize the results to a finite measure case. The first example is the univariate Dirac GAN, which was introduced by \cite{18Mescheder}.

\begin{definition}(Dirac GAN)
The Dirac GAN comprises a linear discriminator $D(x;\psi)=\psi x$, data distribution $p_d=\delta_0$, and sample distribution $p_\theta=\delta_\theta$.
\end{definition}

The Dirac-GAN with a gradient penalty with an arbitrary probability measure is known to be globally convergent\citep{18Mescheder}. We argue that this result can be generalized to a finite penalty measure case.

\begin{lemma} \label{lemma:diracgeneralGAN}
Consider the Dirac GAN problem with SGP form $(D(x;\psi)=\psi x, \delta_0, \delta_\theta, \mu_{\psi,\theta})$. Suppose that some small $\eta > 0$ exists such that its finite penalty measure $\mu_{\psi,\theta}$ with mass $M(\psi,\theta) = \int 1d\mu_{\psi,\theta} \geq 0$ satisfies either
\begin{itemize}
\item $M(\psi,\theta)>0$ for $(\psi,\theta) \in B_\eta((0,0))$ or
\item $M(0,0)=0$ and $\psi \nabla_\psi M(\psi,\theta) \geq 0$ for $(\psi,\theta) \in B_\eta((0,0))$.
\end{itemize}
Then, the SGP $\mu$-WGAN optimization dynamics with $(D(x;\psi)=\psi x, \delta_0, \delta_\theta, \mu_{\psi,\theta})$ are locally stable at the origin and the basin of attraction $B = B_R((0,0))$ is open ball with radius $R$. Its radius is given as follows.
\begin{align*}
R = \max \{\eta \geq 0 | 2M(\psi,\theta) + \psi \nabla_\psi M(\psi,\theta) \geq 0 \text{ for all }(\psi,\theta)\text{ such that } \psi^2+\theta^2\leq \eta^2 \}
\end{align*}
\end{lemma}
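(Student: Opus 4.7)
The strategy is to reduce the general-measure dynamics to a two-dimensional ODE in $(\psi,\theta)$, and then apply a Lyapunov/LaSalle argument on the ball $B_R((0,0))$. First I would substitute $D(x;\psi)=\psi x$, $p_d=\delta_0$, $p_\theta=\delta_\theta$ into the SGP dynamic system. Since $\nabla_x D(x;\psi)\equiv \psi$ is constant in $x$, the penalty integral collapses to
\begin{equation*}
\mathbb{E}_{\mu_{\psi,\theta}}\bigl[\|\nabla_x D\|^2\bigr] = \psi^2\,M(\psi,\theta),
\end{equation*}
so no measure-valued differentiation is actually needed here; the ordinary product rule gives $\nabla_\psi[\psi^2 M] = 2\psi M + \psi^2 \nabla_\psi M$. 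The resulting system is
\begin{align*}
\dot{\psi} &= -\theta - \tfrac{\rho}{2}\bigl(2\psi M(\psi,\theta) + \psi^2\nabla_\psi M(\psi,\theta)\bigr),\\
\dot{\theta} &= \psi.
\end{align*}

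Next I would introduce the Lyapunov candidate $V(\psi,\theta)=\tfrac{1}{2}(\psi^2+\theta^2)$, which is positive definite and radially unbounded. A direct computation shows the cross term $-\psi\theta + \theta\psi$ cancels, leaving
\begin{equation*}
\dot{V} = -\tfrac{\rho}{2}\,\psi^2\bigl(2M(\psi,\theta) + \psi\,\nabla_\psi M(\psi,\theta)\bigr).
\end{equation*}
This matches exactly the expression in the definition of $R$, so on $B_R((0,0))$ we obtain $\dot{V}\le 0$. It then remains to verify that each of the two hypotheses forces $2M+\psi\nabla_\psi M\ge 0$ on some neighborhood of the origin: under the first hypothesis continuity of $M$ and $\nabla_\psi M$ plus $M(0,0)>0$ makes $2M$ dominate $\psi\nabla_\psi M$ for $(\psi,\theta)$ small; under the second, $M\ge 0$ (since $\mu_{\psi,\theta}$ is a finite nonnegative measure) combined with $\psi\nabla_\psi M\ge 0$ gives the inequality directly.

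From $\dot V\le 0$ on the closed sublevel sets inside $B_R$, those sets are positively invariant, so trajectories starting in $B_R$ stay there. To upgrade Lyapunov stability to asymptotic stability I would invoke LaSalle's invariance principle: the largest invariant subset of $\{\dot V=0\}\cap B_R$ must lie in $\{\psi=0\}$ (at least when $2M+\psi\nabla_\psi M$ is not identically zero on a nontrivial arc, which it is not in a neighborhood of $0$ under either hypothesis), but on $\{\psi=0\}$ the first equation reduces to $\dot\psi=-\theta$, so invariance forces $\theta=0$ as well. Hence $(0,0)$ is the only invariant point and every trajectory in $B_R$ converges to it, giving the claimed basin of attraction.

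The main obstacle I anticipate is the LaSalle step under the second hypothesis: $M(0,0)=0$ allows $2M+\psi\nabla_\psi M$ to vanish on curves through the origin other than $\{\psi=0\}$, so one must argue carefully that any invariant trajectory lying entirely in the zero set of $\dot V$ is in fact confined to $\{\psi=0\}$. I would handle this by noting that an invariant trajectory with $\psi\not\equiv 0$ would, by $\dot\theta=\psi$, move $\theta$ monotonically, and then check that the vector field transversally exits the zero locus of $2M+\psi\nabla_\psi M$ — alternatively, strengthen the argument by a Barbashin--Krasovskii-type refinement using the explicit form of $\dot V$ together with $\dot\theta=\psi$ to exclude nontrivial invariant trajectories.
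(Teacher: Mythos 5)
Your core computation coincides with the paper's: the same reduction to the planar system, the same Lyapunov function $\psi^2+\theta^2$ (up to a factor of $\tfrac12$), and the same identity $\dot V=-\tfrac{\rho}{2}\psi^2\bigl(2M+\psi\nabla_\psi M\bigr)$, so the stability claim and the formula for $R$ come out the same way. Two differences are worth noting. First, the paper splits into cases: when $M(0,0)>0$ it linearizes, computes the Jacobian entry $Z=-\rho M(0,0)<0$, and gets local (indeed exponential) stability from the Hurwitz property, reserving the Lyapunov argument for the degenerate case $M(0,0)=0$; your unified Lyapunov treatment covers both at once, at the cost of losing the rate information that linearization provides in the first case. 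Second, and more substantively, your LaSalle step attempts to prove strictly more than the paper does. The paper stops at $\dot L\le 0$ and invokes only the Lyapunov stability theorem (trajectories starting in the ball remain in the ball); it never claims convergence to the origin in the $M(0,0)=0$ case. The obstacle you flag is not merely technical: under the second hypothesis the choice $M\equiv 0$ is admissible, the dynamics reduce to $\dot\psi=-\theta$, $\dot\theta=\psi$, and every orbit is a closed circle, so no trajectory other than the origin converges. Hence the LaSalle upgrade to asymptotic stability is genuinely false at this level of generality, and you should either drop it (matching the paper's weaker but correct conclusion) or add a hypothesis such as $2M+\psi\nabla_\psi M>0$ for $\psi\neq 0$ near the origin, under which your invariance argument on $\{\psi=0\}$ does close.
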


Motivated by this example, we can extend this idea to the other toy example given by \cite{STABILITY_GAN}, where WGAN fails to converge to the equilibrium points $(\psi,\theta) = (0,\pm 1)$.

\begin{lemma} \label{lemma:toysimpleGAN}
Consider the toy example $(D(x;\psi)=\psi x^2, U(-1,1), U(-|\theta|,|\theta|), \mu_\theta)$ where $U(0,0)=\delta_0$ and the ideal equilibrium points are given by $(\psi^*,\theta^*)=(0,\pm 1)$. For a finite measure $\mu = \mu_\theta$ on $\mathbb{R}$ which is independent of $\psi$, suppose that $\mu_\theta\rightarrow\mu^*$ with $\mu^*\neq C\delta_0$ for $C\geq 0$. The dynamic system is locally stable near the desired equilibrium $(0,\pm 1)$, where the spectrum of the Jacobian at $(0,\pm 1)$ is given by $\lambda = -2\rho\mathbb{E}_{\mu^*}[x^2] \pm \sqrt{4\rho^2\mathbb{E}_{\mu^*}[x^2]^2 - \frac{4}{9}}$.
\end{lemma}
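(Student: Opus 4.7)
The plan is to write the dynamical system explicitly for this toy example, verify that $(0,\pm 1)$ are equilibria, linearize, and read off stability from the spectrum of the resulting $2\times 2$ Jacobian.

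First I would substitute $D(x;\psi)=\psi x^2$, $p_d=U(-1,1)$ and $p_\theta=U(-|\theta|,|\theta|)$ into the system derived in the previous subsection. A direct computation gives $\mathbb{E}_{p_d}[x^2]=1/3$, $\mathbb{E}_{p_\theta}[x^2]=\theta^2/3$, and $\norm{\nabla_x D}^2=4\psi^2 x^2$; since $\mu_\theta$ is independent of $\psi$, $\nabla_\psi \mathbb{E}_{\mu_\theta}[\norm{\nabla_x D}^2] = 8\psi\,\mathbb{E}_{\mu_\theta}[x^2]$, while $\nabla_\theta \mathbb{E}_{p_\theta}[D] = \nabla_\theta(\psi\theta^2/3) = 2\psi\theta/3$. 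The system therefore becomes
\begin{align*}
\dot\psi = \tfrac{1}{3}(1-\theta^2) - 4\rho\,\psi\,\mathbb{E}_{\mu_\theta}[x^2], \qquad \dot\theta = \tfrac{2}{3}\psi\theta,
\end{align*}
and both sides vanish at $(0,\pm 1)$, confirming these are equilibria.

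Next I would linearize at each equilibrium. The key observation is that because $\psi^\ast=0$, the potentially troublesome $\theta$-derivative of $\mathbb{E}_{\mu_\theta}[x^2]$ appears only as $4\rho\,\psi\,\partial_\theta \mathbb{E}_{\mu_\theta}[x^2]$ and is killed by the factor $\psi$ at $(0,\pm 1)$. Consequently the Jacobian depends on the measure only through the value $\mathbb{E}_{\mu^\ast}[x^2]$, obtained from the weak convergence $\mu_\theta \to \mu^\ast$ via the Portmanteau theorem (using the boundedness convention for penalty measures so that $x^2$ may be treated as an admissible test function); the hypothesis $\mu^\ast \ne C\delta_0$ guarantees $\mathbb{E}_{\mu^\ast}[x^2]>0$. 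Writing the Jacobian out gives
\begin{align*}
J_{(0,\pm 1)} = \begin{pmatrix} -4\rho\,\mathbb{E}_{\mu^\ast}[x^2] & \mp 2/3 \\ \pm 2/3 & 0 \end{pmatrix},
\end{align*}
which has trace $-4\rho\,\mathbb{E}_{\mu^\ast}[x^2]<0$ and determinant $4/9$, yielding the claimed eigenvalues $\lambda = -2\rho\,\mathbb{E}_{\mu^\ast}[x^2] \pm \sqrt{4\rho^2\,\mathbb{E}_{\mu^\ast}[x^2]^2 - 4/9}$. Both roots have strictly negative real part, so the linearization theorem delivers local asymptotic stability.

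The one subtle step, which I expect to be the main obstacle, is the passage $\mathbb{E}_{\mu_\theta}[x^2] \to \mathbb{E}_{\mu^\ast}[x^2]$ from weak convergence alone, since $x \mapsto x^2$ is not a bounded continuous function on $\mathbb{R}$. I would handle this either by invoking the standing assumption that all penalty measures under consideration have support in a fixed compact set (so $x^2$ agrees with a bounded continuous test function on their supports and Portmanteau applies directly), or by upgrading the hypothesis to convergence of second moments, which is the natural regularity to demand of a penalty measure in this setting. Together with mild $C^1$-regularity of the vector field in a neighbourhood of the equilibrium (needed to invoke the linearization theorem itself), these ingredients reduce the problem to the routine $2\times 2$ eigenvalue computation above.
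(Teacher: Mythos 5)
Your proposal is correct and follows essentially the same route as the paper's proof: write out the system $\dot\psi = \tfrac{1}{3}-\tfrac{\theta^2}{3}-4\rho\psi\mathbb{E}_{\mu_\theta}[x^2]$, $\dot\theta=\tfrac{2\psi\theta}{3}$, and read stability off the Jacobian $\begin{pmatrix}-4\rho\,\mathbb{E}_{\mu^*}[x^2] & \mp 2/3\\ \pm 2/3 & 0\end{pmatrix}$ at $(0,\pm 1)$. You additionally make explicit two points the paper leaves implicit --- that the $\partial_\theta\mathbb{E}_{\mu_\theta}[x^2]$ term is killed by the factor $\psi=0$ at equilibrium, and that passing from $\mathbb{E}_{\mu_\theta}[x^2]$ to $\mathbb{E}_{\mu^*}[x^2]$ needs more than weak convergence against bounded test functions --- but these are refinements of the same argument, not a different approach.
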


\section{Main Convergence Theorem} 
We propose the convergence property of WGAN with a simple gradient penalty on an arbitrary penalty measure $\mu$ for a realizable case: $\theta=\theta^*$ with $p_d=p_{\theta^*}$ exists. In \autoref{sec41}, we provide the necessary assumptions, which comprise our main convergence theorem. In \autoref{sec42}, we give the main convergence theorem with a sketch of the proof. A more rigorous analysis is given in the Appendix. 

\subsection{Assumptions}\label{sec41}
The first assumption is made regarding the equilibrium condition for GANs, where we state the ideal conditions for the discriminator parameter and generator parameter. As the parameters converge to the ideal equilibrium, the sample distribution$(p_\theta)$ converges to the real data distribution$(p_d)$ and the discriminator cannot distinguish the generated sample and the real data.
\begin{Assumption}
$p_\theta \rightarrow p_d$ as $\theta\rightarrow\theta^*$ and $D(x;\psi^*) = 0$ on $supp(p_d)$ and its small open neighborhood, i.e., $x\in\cup_{x'\in supp(p_d)} B_{\epsilon_{x'}}(x')$ implies $D(x;\psi^*) = 0$. For simplicity, we denote $\cup_{x'\in supp(p_d)} B_{\epsilon_{x'}}(x')$ as $B(supp(p_d))$.
\end{Assumption}

The second assumption ensures that the higher order terms cannot affect the stability of the SGP $\mu$-WGAN. In the Appendix, we consider the case where the WGAN fails to converge when Assumption 2 is not satisfied. Compared with the previous study by \cite{STABILITY_GAN}, the conditions for the discriminator parameter are slightly modified.

\begin{Assumption}
\begin{align*}
g(\theta) = \norm{\mathbb{E}_{p_d}[\nabla_\psi D(x;\psi^*)] - \mathbb{E}_{p_\theta}[\nabla_\psi D(x;\psi^*)]}^2, h(\psi) = \mathbb{E}_{\mu_{\psi,\theta^*}}[\norm{\nabla_x D(x;\psi)}^2]
\end{align*}
are locally constant along the nullspace of the Hessian matrix.
\end{Assumption}

The third assumption allows us to extend our results to discrete probability distribution cases, as described by \cite{18Mescheder}.

\begin{Assumption}
$\exists\epsilon_g>0$ such that $D(x;\psi^*) = 0$ on $\cup_{|\theta-\theta^*|<\epsilon_g} supp(p_\theta)$.
\end{Assumption}

The fourth assumption indicates that there are no other ``bad'' equilibrium points near $(\psi^*,\theta^*)$, which justifies the projection along the axis perpendicular to the null space.

\begin{Assumption}
A bad equilibrium does not exist near the desired equilibrium point. Thus, $(\psi^*,\theta^*)$ is an isolated equilibrium or there exist $\delta _d, \delta_g> 0$ such that all equilibrium points in $B_{\delta_d}(\psi^*)\times B_{\delta_g}(\theta^*)$ satisfy the other assumptions.
\end{Assumption}

The last assumption is related to the necessary conditions for the penalty measure. A calculation of the gradient penalty based on samples from the data manifold and generator manifold or the interpolation of both was introduced in recent studies \citep{WGANGP,ROTH,18Mescheder}. First, we propose strong conditions for the penalty measure.

\begin{Assumption} The finite penalty measure $\mu = \mu_\theta$ satisfies the followings:
\begin{enumerate}
\item[a]
$\mu_\theta\rightarrow\mu_{\theta^*} = \mu^*$ and $\mu_\theta$ is independent of the  discriminator parameter $\psi$.
\item[b]
$supp(p_{d}) \subset supp(\mu^*)$
\item[c]
$\exists\epsilon_\mu>0$ such that $supp(\mu_\theta)\subset B(supp(p_d))$ for $|\theta-\theta^*|<\epsilon_\mu$.
\end{enumerate}
\end{Assumption}

The assumption given above means that the support of the penalty measure $\mu_\theta$ should approach the support of the data manifolds smoothly as $\theta\rightarrow\theta^*$. Thus, the gradient penalty should be evaluated based on the data manifold and some open neighborhood $B(supp(p_d))$ near the equilibrium. However, the penalty measure from WGAN-GP with a simple gradient penalty still reaches equilibrium without satisfying Assumption 5c. Therefore, we suggest Assumption 6, which is a weak version of Assumption 5. Assumption 6a\footnote{This condition is technically required to handle the derivative of the measure in a convenient manner using the weak formulation. Even if the measure is not differentiable, it may possible to differentiate the integral. For instance, $\delta_\psi$ is continuous but it does not have its weak derivative. However, it is still possible to differentiate $\mathbb{E}_{\delta_\psi}[\omega(x)] = \omega(\psi)$ if the function $\omega$ is differentiable at $\psi$.} is technically required to take the derivative of the integral $\mathbb{E}_{\mu_{\psi,\theta}}[\norm{\nabla_x D(x;\psi)}^2]$ with respect to $\psi$.

\begin{Assumption} (Weak version of Assumption 5) The finite penalty measure $\mu = \mu_{\psi,\theta}$ satisfies the following.
\begin{enumerate}
\item[a]
$\mu_{\psi,\theta}\rightarrow\mu_{\psi^*, \theta^*} = \mu^*$, where $supp(\mu_{\psi,\theta})$ only depends on $\theta$. Near the equilibrium, $\mu_{\psi,\theta}$ can be weakly differentiated twice with respect to $\psi$. In addition, its mass $M(\psi,\theta) = \int 1d\mu_{\psi,\theta}$ is a twice-differentiable function of $\psi$ and bounded by $M_1<\infty$ near the equilibrium.
\item[b]
$E_{\mu^*}[\nabla_{\psi x}D\nabla_{\psi x}^TD]$ is positive definite or $supp(p_{d}) \subset supp(\mu^*)$.
\item[c]
$\exists\epsilon_\mu>0$ such that $supp(\mu_\theta)\subset V$ for $|\theta-\theta^*|<\epsilon_\mu$, where $V = \{x|\nabla_x D(x;\psi^*)=0\}$.
\end{enumerate}
\end{Assumption}

In summary, the gradient penalty regularization term with any penalty measure where the support approaches $B(supp(p_d))$ in a smooth manner works well and this main result can explain the regularization effect of previously proposed penalty measures such as $\mu_{GP}$, $p_d$, $p_\theta$, and their mixtures.

\subsection{Main Convergence Theorem}\label{sec42}
According to the modified assumptions given above, we prove that the related dynamic system is locally stable near the equilibrium. The tools used for analyzing stability are mainly based on those described by \cite{STABILITY_GAN}. Our main contributions comprise proposing the necessary conditions for the penalty measure and proving the local stability for all penalty measures that satisfy Assumption 6.
\begin{theorem}
Suppose that our SGP $\mu$-WGAN optimization problem $(D,p_d,p_\theta,\mu)$ with equilibrium point $(\psi^*,\theta^*)$ satisfies the assumptions given above. Then, the related dynamic system is locally stable at the equilibrium.
\end{theorem}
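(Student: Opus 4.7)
The plan is to linearize the coupled dynamical system at the equilibrium $(\psi^*,\theta^*)$, show that the resulting Jacobian has the block form
\[
J=\begin{pmatrix}-\rho K & B\\ -B^{T} & 0\end{pmatrix},\qquad K\succeq 0,
\]
invoke a linear-stability lemma to deduce $\mathrm{Re}\,\lambda\leq 0$ for every eigenvalue, and then use Assumptions~2 and~4 to conclude genuine local asymptotic stability via a center-manifold / projection argument in the spirit of \cite{STABILITY_GAN}.

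First I would compute each Jacobian block separately. The $\theta\theta$-block vanishes immediately from Assumption~3: since $D(x;\psi^*)\equiv 0$ on $supp(p_\theta)$ for $\theta$ in a neighborhood of $\theta^*$, the vector field $\dot\theta=\nabla_\theta\mathbb{E}_{p_\theta}[D(\cdot;\psi^*)]$ is identically zero there, so its $\theta$-derivative is $0$. The off-diagonal blocks $J_{\psi\theta}$ and $J_{\theta\psi}$ both arise from differentiating $\mathbb{E}_{p_\theta}[D(x;\psi)]$ in the two parameter variables and are therefore negatives of each other's transpose by equality of mixed partials; the gradient-penalty contribution to $J_{\psi\theta}$ drops out because $\nabla_x D(x;\psi^*)=0$ on $supp(\mu_{\psi^*,\theta})$ by Assumption~5c/6c.

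The decisive block is $J_{\psi\psi}$, which requires differentiating the functional $P(\psi,\theta)=\mathbb{E}_{\mu_{\psi,\theta}}[\norm{\nabla_x D(x;\psi)}^2]$ twice in $\psi$. Here I would invoke the measure-valued product rule introduced in Section~2.1, writing $\nabla_\psi P$ as a sum of a term integrated against $\mu_{\psi,\theta}$ itself and a term integrated against the weak derivative $\mu'_{\psi,\theta}$ (scaled by $M'(\psi,\theta)$). After differentiating once more and evaluating at $(\psi^*,\theta^*)$, every surviving term must contain a factor of either $\nabla_x D(x;\psi^*)$ or $\norm{\nabla_x D(x;\psi^*)}^2$. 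Assumption~5c/6c makes both factors vanish on $supp(\mu^*)$, so the only piece that survives is
\[
\nabla_\psi^2 P\big|_{*}=2\,\mathbb{E}_{\mu^*}\bigl[(\nabla_{\psi x}D)(\nabla_{\psi x}D)^{T}\bigr],
\]
giving $J_{\psi\psi}=-\rho K$ with $K\succeq 0$; Assumption~6b then pins down the null directions of $K$.

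Having established the block form, I would apply a standard linear-stability lemma: for Jacobians of this anti-symmetric-plus-negative-semidefinite shape every eigenvalue has non-positive real part, and the purely imaginary modes live in $\mathrm{Null}(K)\cap\mathrm{Null}(B^{T})$ on the discriminator side and in $\mathrm{Null}(B)$ on the generator side. Assumption~2 identifies those null directions as exactly the ones along which $g$ and $h$ are locally constant, so both update fields vanish to second order there; a center-manifold reduction then yields local asymptotic stability on the orthogonal complement, and Assumption~4 excludes competing nearby equilibria that could obstruct the reduction. The hard part will be the rigorous weak double-differentiation of $P(\psi,\theta)$ under Assumption~6: one must justify interchanging $\nabla_\psi$ with $\int(\cdot)\,d\mu_{\psi,\theta}$ twice, exploit the mass bound $M(\psi,\theta)\leq M_1$ to secure uniform integrability, and verify that every term arising from $\mu'_{\psi,\theta}$ or $M'(\psi,\theta)$ carries a factor that vanishes on $supp(\mu^*)$ at the equilibrium.
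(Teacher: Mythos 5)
Your overall route is the same as the paper's: compute the Jacobian blockwise, use Assumptions 1 and 3 to kill the WGAN part of the $\psi\psi$ block and the $\theta\theta$ block, use measure-valued differentiation together with Assumption 6c to reduce the penalty Hessian to $2\mathbb{E}_{\mu^*}[\nabla_{\psi x}^T D\,\nabla_{\psi x}D]$ and to annihilate the penalty's contribution to the off-diagonal block, and then deal with degenerate eigenvalues by projecting out null directions in the spirit of the Nagarajan--Kolter analysis. Up to the point where $J$ takes the form $\begin{pmatrix}-\rho K & B\\ -B^{T} & 0\end{pmatrix}$ with $K\succeq 0$, your computation matches the paper's.

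The gap is in the final step. Your claim that ``the purely imaginary modes live in $\mathrm{Null}(K)\cap\mathrm{Null}(B^{T})$ on the discriminator side'' is false for a general matrix of this shape: if $u\in N(K)$ is an eigenvector of $BB^{T}$ with eigenvalue $\omega^{2}>0$, then $J$ has the purely imaginary pair $\pm i\omega$ (for instance $K=\mathrm{diag}(0,1)$, $B=(1,0)^{T}$ gives spectrum $\{-\rho,\,\pm i\}$). Such centre modes are not removed by projecting out $N(K)\cap N(B^{T})$ and $N(B)$, and they are not equilibrium directions, so the ``update fields vanish along the null space'' argument does not dispose of them. What rules them out is precisely the inclusion $N(K)\subset N(B^{T})$ (in the paper's notation, $N(Q^{T})\subset N(R^{T})$, equivalently $C(R)\subset C(Q)$), which is the key lemma of the paper's degenerate-case analysis and which your proposal never states or proves. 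The paper obtains it by taking $u\in N(Q)$, using Assumption 2 (local constancy of $h$) to conclude $\nabla_{x}D(x;\psi^{*}+\xi u)=0$ on $supp(\mu^{*})$, differentiating in $\xi$ to get $u^{T}\nabla_{\psi x}^{T}D(x;\psi^{*})=0$ there, transferring this identity to $supp(p_{d})$ via the support inclusion in Assumption 6b, and then computing $u^{T}R=0$ directly through the generator representation $R=\int u^{T}\nabla_{x\psi}D\,\nabla_{\theta}G\,p_{latent}\,dz$. This is the one genuinely non-mechanical step of the proof; without it the projected off-diagonal block $T_{D}^{T}RT_{G}$ need not have full column rank and the Hurwitz lemma does not apply. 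Two smaller omissions: the paper shows the null directions consist of exact equilibria (not merely points where the fields vanish ``to second order''), which is what licenses passing from stability of the projected system to stability of the full one; and the boundary cases where $T_{D}$ or $T_{G}$ is empty require the separate short arguments given in the appendix.
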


A detailed proof of the main convergence theorem is given in the Appendix. A sketch of the proof is given in three steps. First, the undesired terms in the Jacobian matrix of the system at the equilibrium are cancelled out. Next, the Jacobian matrix at equilibrium is given by $\begin{bmatrix}
-\rho Q & -R \\
R^T & 0
\end{bmatrix}$, where $Q = \mathbb{E}_{\mu^*}[\nabla_{\psi x}D\nabla^T_{\psi x}D]$ and $R = \nabla_\theta\mathbb{E}_{p_\theta}[\nabla_\psi D]|_{\theta = \theta^*}$. The system is locally stable when both $Q$ and $R^TR$ are positive definite. We can complete the proof by dealing with zero eigenvalues by showing that $N(Q^T) \subset N(R^T)$ and the projected system's stability implies the original system's stability. 

Our analysis mainly focuses on WGAN, which is the simplest case of general GAN minimax optimization
\begin{align*}
\max_\psi &: \mathbb{E}_{p_d}[f(D(x;\psi))] + \mathbb{E}_{p_\theta}[f(-D(x;\psi))] - \frac{\rho}{2}\mathbb{E}_\mu[\norm{\nabla_x D(x;\psi)}^2] \\
\min_\theta &: \mathbb{E}_{p_d}[f(D(x;\psi))] + \mathbb{E}_{p_\theta}[f(-D(x;\psi))]
\end{align*}
with $f(x)=x$. Similar approach is still valid for general GANs with concave function $f$ with $f''(x) < 0$ and $f'(0)\neq 0$.

\section{Experimental Results}
We claim that every penalty measure that satisfies the assumptions can regularize the WGAN and generate similar results to the recently proposed gradient penalty methods. Several penalty measures were tested based on two-dimensional problems (mixture of 8 Gaussians, mixture of 25 Gaussians, and swissroll), MNIST and CIFAR-10 datasets using a simple gradient penalty term. In the comparisons with WGAN, the recently proposed penalty measures and our test penalty measures used the same network settings and hyperparameters. The penalty measures and its detailed sampling methods are listed in \autoref{tab:table1}, where $x_d \sim p_d, x_g \sim p_\theta$, and $\alpha \sim U(0,1)$. $\mathcal{A}$ indicates fixed anchor point in $\mathcal{X}$. 

\begin{table}[htb]
\caption{List of benchmark WGANs (WGAN and WGAN-GP with non-zero centered gradient penalty) and 5 penalty measures with a simple gradient penalty term. In this table, WGAN-GP represents the previous model proposed by \citep{WGANGP}, which penalizes the WGAN with non-zero centered gradient penalty terms, whereas $\mu_{GP}$ represents the simple method. In our experiment, no additional weights are applied on 5 penalty measures and they are all probability distributions.}
\begin{center}
\begin{tabularx}{0.9\textwidth}{@{}lXX@{}}
\toprule
Penalty    & Penalty term  & Penalty measure, sampling method  \\ \midrule
WGAN & None(Weight Clipping) & None \\
WGAN-GP & $\mathbb{E}_\mu[(\norm{\nabla_x D} - 1)^2]$ & $\hat{x} = \alpha x_d + (1-\alpha) x_g$ \\ \midrule
$p_g$      & $\mathbb{E}_\mu[\norm{\nabla_x D}^2]$ & $\hat{x} = x_g$  \\
$p_d$      & $\mathbb{E}_\mu[\norm{\nabla_x D}^2]$ & $\hat{x} = x_d$ \\
$\mu_{GP}$  & $\mathbb{E}_\mu[\norm{\nabla_x D}^2]$ & $\hat{x} = \alpha x_d + (1-\alpha) x_g$  \\
$\mu_{mid}$ & $\mathbb{E}_\mu[\norm{\nabla_x D}^2]$ & $\hat{x} = 0.5 x_d + 0.5 x_g$ \\
$\mu_{g,anc}$ & $\mathbb{E}_\mu[\norm{\nabla_x D}^2]$ & $\hat{x} = \alpha \mathcal{A} + (1-\alpha) x_g$ \\ \bottomrule
\end{tabularx}
\end{center}
\label{tab:table1}
\end{table}

By setting the previously proposed WGAN with weight-clipping\citep{WGAN} and WGAN-GP\citep{WGANGP} as the baseline models, SGP $\mu$-WGAN was examined with various penalty measures comprising three recently proposed measures and two artificially generated measures. $p_\theta$ and $p_d$ were suggested by \cite{18Mescheder} and $\mu_{GP}$ was introduced from the WGAN-GP. We analyzed the artificial penalty measures $\mu_{mid}$ and $\mu_{g,anc}$ as the test penalty measures.

The experiments were conducted based on the implementation of the \cite{WGANGP}. The hyperparameters, generator/discriminator structures, and related TensorFlow implementations can be found at \url{https://github.com/igul222/improved_wgan_training} \citep{WGANGP}. Only the loss function was modified slightly from a non-zero centered gradient penalty to a simple penalty. For the CIFAR-10 image generation tasks, the inception score\citep{INCEPTION} and FID\citep{FID} were used as benchmark scores to evaluate the generated images.

\subsection{2D Examples and MNIST}
We checked the convergence of $p_\theta$ for the 2D examples (8 Gaussians, swissroll data, and 25 Gaussians) and MNIST digit generation for the SGP-WGANs with five penalty measures. MNIST and 25 Gaussians were trained over 200K iterations, the 8 Gaussians were trained for 30K iterations, and the Swiss  Roll data were trained for 100K iterations. The anchor $\mathcal{A}$ for $\mu_{g,anc}$ was set as $(2,-1)$ for the 2D examples and 784 gray pixels for MNIST. We only present the results obtained for the MNIST dataset with the penalty measures comprising $\mu_{mid}$ and $\mu_{g,anc}$ in \autoref{fig:imagesmnist}. The others are presented in the Appendix.

\begin{figure}[htb]
\centering 
\begin{subfigure}{0.44\textwidth}
\includegraphics[width=\linewidth]{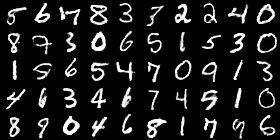}
\label{fig:16}
\end{subfigure}
\begin{subfigure}{0.44\textwidth}
  \includegraphics[width=\linewidth]{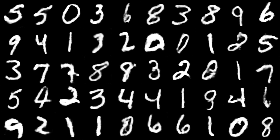}
  \label{fig:17}
\end{subfigure}
\caption{MNIST example. Images generated with $\mu_{mid}$(left) and $\mu_{g,anc}$(right).}
\label{fig:imagesmnist}
\end{figure}

\subsection{CIFAR-10}
DCGAN and ResNet architectures were tested on the CIFAR-10 dataset. The generators were trained for 200K iterations. The anchor $\mathcal{A}$ for $\mu_{g,anc}$ during CIFAR-10 generation was set as fixed random pixels. The WGAN, WGAN-GP, and five penalty measures were evaluated based on the inception score and FID, as shown in \autoref{tab:table2}, which are useful tools for scoring the quality of generated images. The images generated from $\mu_{mid}$ and $\mu_{g,anc}$ with ResNet are shown in \autoref{fig:imagescifar}. The others are presented in the Appendix.

\begin{table}[htb]
\caption{Benchmark score results obtained based on the CIFAR-10 dataset under DCGAN and ResNet architectures. The higher inception score and lower FID indicate the good quality of the generated images.}
\begin{center}
\begin{tabular}{@{}lllll@{}}
\toprule
\multirow{2}{*}{Penalty} & \multicolumn{2}{c}{DCGAN} & \multicolumn{2}{c}{ResNet} \\
    & Inception & FID & Inception  & FID   \\ \midrule
WGAN
\footnote{ WGAN failed to generate images for the ResNet architecture } & $5.64 \pm 0.09$ & 48.7 & - & - \\
WGAN-GP & $6.48 \pm 0.10$ & 35.0 & $7.82 \pm 0.09$ & 18.1 \\ \midrule
$p_g$      & $6.46 \pm 0.09$  & 38.0 & $7.63 \pm 0.10$  &  20.9     \\
$p_d$      & $6.33 \pm 0.07$  &  38.9 & $7.63 \pm 0.09$  &  20.3     \\
$\mu_{GP}$  & $6.40 \pm 0.08$  & 35.4 & $7.60 \pm 0.09$  & 18.3      \\
$\mu_{mid}$ & $6.60 \pm 0.07$  & 33.9 & $7.86 \pm 0.07$  & 16.4      \\
$\mu_{g,anc}$ & $6.45 \pm 0.07$ & 33.7 & $7.36 \pm 0.09$ & 22.4     \\ \bottomrule
\end{tabular}
\end{center}
\label{tab:table2}
\end{table}

\begin{figure}[htb]
\centering 
\begin{subfigure}{0.39\textwidth}
\includegraphics[width=\linewidth]{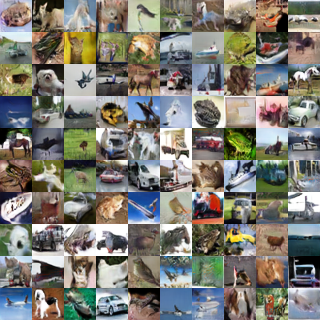} 
\label{fig:18}
\end{subfigure}
\begin{subfigure}{0.39\textwidth}
  \includegraphics[width=\linewidth]{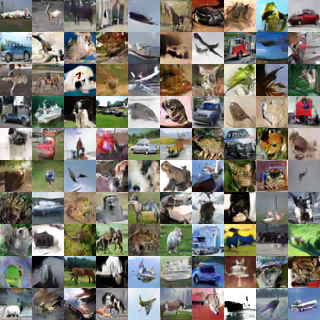}
  \label{fig:19}
\end{subfigure}
\caption{CIFAR-10 example. Images generated with $\mu_{mid}$(left) and $\mu_{g,anc}$(right) under the ResNet architecture.}
\label{fig:imagescifar}
\end{figure}

\section{Conclusion}
In this study, we proved the local stability of simple gradient penalty $\mu$-WGAN optimization for a general class of finite measure $\mu$. This proof provides insight into the success of regularization with previously proposed penalty measures. We explored previously proposed analyses based on various gradient penalty methods. Furthermore, our theoretical approach was supported by experiments using unintuitive penalty measures. In future research, our works can be extended to alternative gradient descent algorithm and its related optimal hyperparameters. Stability at non-realizable equilibrium points is one of the important topics on stability of GANs. Optimal penalty measure for achieving the best convergence speed can be also investigated using a spectral theory, which provides the mathematical analysis on stability of GAN with a precise information on the convergence theory.

\subsubsection*{Acknowledgments}
We thank Dr Seok Hyun Hong for fruitful discussions about this study. Hyung Ju Hwang was supported by the Basic Science Research Program through the National Research Foundation of Korea (NRF-2017R1E1A1A03070105).

\newpage
\bibliography{gan}

\begin{thebibliography}{10}

\bibitem{DBLP:journals/corr/ArjovskyB17}
Mart{\'{\i}}n Arjovsky and L{\'{e}}on Bottou.
\newblock Towards principled methods for training generative adversarial
  networks.
\newblock In {\em International Conference on Learning Representations}, 2017.

\bibitem{WGAN}
Mart{\'{\i}}n Arjovsky, Soumith Chintala, and L{\'{e}}on Bottou.
\newblock Wasserstein generative adversarial networks.
\newblock In {\em Proceedings of the 34th International Conference on Machine
  Learning}, pages 214--223, 2017.

\bibitem{14GAN}
Ian~J. Goodfellow, Jean Pouget{-}Abadie, Mehdi Mirza, Bing Xu, David
  Warde{-}Farley, Sherjil Ozair, Aaron~C. Courville, and Yoshua Bengio.
\newblock Generative adversarial nets.
\newblock In {\em Advances in Neural Information Processing Systems}, pages
  2672--2680, 2014.

\bibitem{WGANGP}
Ishaan Gulrajani, Faruk Ahmed, Mart{\'{\i}}n Arjovsky, Vincent Dumoulin, and
  Aaron~C. Courville.
\newblock Improved training of wasserstein gans.
\newblock In {\em Advances in Neural Information Processing Systems}, pages
  5769--5779, 2017.

\bibitem{Heidergott2008}
B.~Heidergott and F.~J. V{\'a}zquez-Abad.
\newblock Measure-valued differentiation for markov chains.
\newblock {\em Journal of Optimization Theory and Applications}, 136:187--209,
  2008.

\bibitem{FID}
Martin Heusel, Hubert Ramsauer, Thomas Unterthiner, Bernhard Nessler, and Sepp
  Hochreiter.
\newblock Gans trained by a two time-scale update rule converge to a local nash
  equilibrium.
\newblock In {\em Advances in Neural Information Processing Systems}, pages
  6629--6640, 2017.

\bibitem{DBLP:conf/cvpr/IsolaZZE17}
Phillip Isola, Jun{-}Yan Zhu, Tinghui Zhou, and Alexei~A. Efros.
\newblock Image-to-image translation with conditional adversarial networks.
\newblock In {\em 2017 {IEEE} Conference on Computer Vision and Pattern
  Recognition, {CVPR} 2017, Honolulu, HI, USA, July 21-26, 2017}, pages
  5967--5976, 2017.

\bibitem{DBLP:conf/cvpr/LedigTHCCAATTWS17}
Christian Ledig, Lucas Theis, Ferenc Huszar, Jose Caballero, Andrew Cunningham,
  Alejandro Acosta, Andrew~P. Aitken, Alykhan Tejani, Johannes Totz, Zehan
  Wang, and Wenzhe Shi.
\newblock Photo-realistic single image super-resolution using a generative
  adversarial network.
\newblock In {\em 2017 {IEEE} Conference on Computer Vision and Pattern
  Recognition, {CVPR} 2017, Honolulu, HI, USA, July 21-26, 2017}, pages
  105--114, 2017.

\bibitem{18Mescheder}
Lars~M. Mescheder, Andreas Geiger, and Sebastian Nowozin.
\newblock Which training methods for gans do actually converge?
\newblock In {\em Proceedings of the 35th International Conference on Machine
  Learning}, pages 3478--3487, 2018.

\bibitem{SOBOLEVGAN}
Youssef Mroueh, Chun-Liang Li, Tom Sercu, Anant Raj, and Yu~Cheng.
\newblock Sobolev {GAN}.
\newblock In {\em International Conference on Learning Representations}, 2018.

\bibitem{STABILITY_GAN}
Vaishnavh Nagarajan and J.~Zico Kolter.
\newblock Gradient descent {GAN} optimization is locally stable.
\newblock In {\em Advances in Neural Information Processing Systems}, pages
  5591--5600, 2017.

\bibitem{FGAN}
Sebastian Nowozin, Botond Cseke, and Ryota Tomioka.
\newblock f-gan: Training generative neural samplers using variational
  divergence minimization.
\newblock In {\em Advances in Neural Information Processing Systems}, pages
  271--279, 2016.

\bibitem{DCGAN}
Alec Radford, Luke Metz, and Soumith Chintala.
\newblock Unsupervised representation learning with deep convolutional
  generative adversarial networks.
\newblock {\em CoRR}, abs/1511.06434, 2015.

\bibitem{DBLP:conf/icml/ReedAYLSL16}
Scott~E. Reed, Zeynep Akata, Xinchen Yan, Lajanugen Logeswaran, Bernt Schiele,
  and Honglak Lee.
\newblock Generative adversarial text to image synthesis.
\newblock In {\em Proceedings of the 33nd International Conference on Machine
  Learning, {ICML} 2016, New York City, NY, USA, June 19-24, 2016}, pages
  1060--1069, 2016.

\bibitem{ROTH}
Kevin Roth, Aur{\'{e}}lien Lucchi, Sebastian Nowozin, and Thomas Hofmann.
\newblock Stabilizing training of generative adversarial networks through
  regularization.
\newblock In {\em Advances in Neural Information Processing Systems}, pages
  2015--2025, 2017.

\bibitem{INCEPTION}
Tim Salimans, Ian~J. Goodfellow, Wojciech Zaremba, Vicki Cheung, Alec Radford,
  and Xi~Chen.
\newblock Improved techniques for training gans.
\newblock In {\em Advances in Neural Information Processing Systems}, pages
  2226--2234, 2016.

\bibitem{SonNoise}
Casper~Kaae S{\o}nderby, Jose Caballero, Lucas Theis, Wenzhe Shi, and Ferenc
  Husz{\'{a}}r.
\newblock Amortised {MAP} inference for image super-resolution.
\newblock {\em International Conference on Learning Representations}, 2017.

\bibitem{DBLP:conf/iccv/ZhangXL17}
Han Zhang, Tao Xu, and Hongsheng Li.
\newblock Stackgan: Text to photo-realistic image synthesis with stacked
  generative adversarial networks.
\newblock In {\em {IEEE} International Conference on Computer Vision, {ICCV}
  2017, Venice, Italy, October 22-29, 2017}, pages 5908--5916, 2017.

\end{thebibliography}
\bibliographystyle{plain}
\newpage

\section*{Appendix A : Proof of Lemmas based on toy examples}
\label{appendix}
\begin{proof}[Proof of Lemma \autoref{lemma:diracgeneralGAN}]
The related dynamic system of $(D(x;\psi)=\psi x, \delta_0, \delta_\theta, \mu_{\psi,\theta})$ can be written as follows.
\begin{align*}
\dot{\psi} &= -\theta - \frac{\rho}{2}\nabla_\psi \mathbb{E}_{\mu_{\psi,\theta}}[\psi^2] \\
\dot{\theta} &= \psi
\end{align*}
First, the only equilibrium point is given by $(\psi^*,\theta^*)=(0,0)$ from 
\begin{align*}
0 &= -\theta - 2\psi M(\psi,\theta) - \psi^2\nabla_\psi M(\psi,\theta) \\
0 &= \psi
\end{align*}
The corresponding Jacobian matrix for the dynamic system is written as:
$$J = \begin{bmatrix}
Z & -1 \\ 1 & 0
\end{bmatrix}$$
where
$$Z = -\frac{\rho}{2}\nabla_{\psi\psi}\mathbb{E}_{\mu_{\psi,\theta}}[\psi^2]\biggr|_{\psi=0,\theta=0}$$
$\nabla_\psi D(x;\psi)=\psi$ does not depend on $x$, so this can be rewritten as:
\begin{align*}
Z &= -\frac{\rho}{2}\nabla_{\psi\psi} (\psi^2\mathbb{E}_{\mu_{\psi,\theta}}[1]) = -\frac{\rho}{2}(2M(\psi,\theta)+4\psi \nabla_\psi M(\psi,\theta)+\psi^2M_{\psi\psi}(\psi,\theta))\biggr|_{\psi=0,\theta=0} 
\\ &= -\rho M(0,0)
\end{align*}
Therefore, if $M(0,0)>0$, then the given system is locally stable because the eigenvalues of its linearized system have negative real parts. If $M(0,0)=0$, then the stability of the system cannot be proved by the linearization theorem. In this case, we consider the following Lyapunov function. $$L(\psi(t),\theta(t))=\psi(t)^2+\theta(t)^2$$
By differentiating with $t$, we obtain
\begin{align*}
\dot{L} &= 2(\psi\psi' + \theta\theta') = -\rho\psi\nabla_\psi(\psi^2M(\psi,\theta)) = -\rho\psi(2\psi M(\psi,\theta) + \psi^2\nabla_\psi M(\psi,\theta)) \\
&= -\rho\psi^2(2M(\psi,\theta) + \psi \nabla_\psi M(\psi,\theta)) \leq 0
\end{align*}
Clearly, $L(\psi,\theta)\geq 0$ and the equality holds iff $\psi=\theta=0$. In addition, $\dot{L}\leq 0$ since $M(\psi,\theta)\geq 0$ and $\psi \nabla_\psi M(\psi,\theta) \geq 0$ from the assumption. Furthermore, it is clear that if $(\psi(0),\theta(0)) \in B_\eta((0,0))$, then $(\psi(\tau),\theta(\tau)) \in B_\eta((0,0))$ for all $\tau \geq 0$ because the Lyapunov function (square of the distance between the origin and $(\psi (\tau),\theta (\tau))$) always decreases as $\tau\rightarrow\infty$. Therefore, the given system is stable according to the Lyapunov stability theorem.

Again, we can check that if $\mu_{\psi,\theta}$ is a probability measure, then the system is globally stable, as shown by \cite{18Mescheder}. The basin of attraction is given by the whole $\mathbb{R}^2$ plane since $M(\psi,\theta)=1$, so $\dot{L}=-\rho\psi^2(2M+\psi \nabla_\psi M) = -2\rho\psi^2 \leq 0$ for every $(\psi,\theta)\in\mathbb{R}^2$.
\end{proof}

\begin{proof}[Proof of Lemma \autoref{lemma:toysimpleGAN}]
From the general setup of the SGP $\mu$-WGAN optimization problem, the dynamic system corresponding to the simple-GAN in Definition 6 can be written as follows.
\begin{align*}
\dot{\psi} &= \frac{1}{3} - \frac{\theta^2}{3} - 4\rho\psi\mathbb{E}_\mu[x^2] \\
\dot{\theta} &= \frac{2\psi\theta}{3}
\end{align*}
If we let $\mathbb{E}_{\mu^*}[x^2] = A^2$, then the Jacobian matrix at the equilibrium $(0,\pm 1)$ is given by $J= \begin{bmatrix}
-4\rho A^2 & \mp\frac{2}{3} \\
\pm\frac{2}{3} & 0
\end{bmatrix}$. Therefore, the given system is locally stable when $A \neq 0$.
\end{proof}
\newpage
\section*{Appendix B : Proof of Lemma related with Assumption 2}
\begin{lemma} \label{diracGANnotconv}
Consider the Dirac-GAN setup and SGP $\mu$-WGAN optimization system with a slightly changed discriminator function $D_2(x;\psi) = \psi x^2$. The system $(D_2,\delta_0,\delta_\theta,\mu_{GP})$ does not converge to $(0,0)$ but for any point $(a,0)$ with $a<0$, the system has equilibrium points on the whole $\psi$-axis and it violates Assumption 2.
\end{lemma}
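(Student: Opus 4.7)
The plan is to write out the ODE system explicitly, identify its equilibrium set, establish non-convergence to the origin via a monotonicity argument, and finally verify that Assumption 2 fails by a direct calculation. First I would compute the dynamics: with $D_2(x;\psi) = \psi x^2$ and $\mu_{GP}$ the law of $\hat x = (1-\alpha)\theta$, a short integration gives $\mathbb{E}_{\mu_{GP}}[4\psi^2 x^2] = \tfrac{4}{3}\psi^2\theta^2$, so the ODE definition of Section~2.2 reduces to
$$\dot\psi = -\theta^2 \Bigl(1 + \tfrac{4\rho\psi}{3}\Bigr), \qquad \dot\theta = 2\psi\theta.$$
Setting both right-hand sides to zero, the factor $\theta$ in $\dot\theta$ shows that every point $(\psi,0)$ is fixed, and these are the only fixed points; in particular, every $(a,0)$ with $a<0$ is an equilibrium, and $(0,0)$ is non-isolated.

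Next I would establish non-convergence to $(0,0)$. The Jacobian at the origin vanishes identically, so Hartman--Grobman is unavailable; instead I would argue by monotonicity in a neighbourhood of the origin. The set $\{\theta = 0\}$ is invariant by uniqueness, so $\theta(t) \neq 0$ along any trajectory with $\theta_0 \neq 0$. In the slab $\psi > -3/(4\rho)$ the factor $1 + 4\rho\psi/3$ is positive, so $\dot\psi < 0$ whenever $\theta \neq 0$, and $\psi$ is strictly monotonically decreasing there. Starting from $\psi_0 > 0$, $\theta_0 > 0$, I would first show that $\psi$ enters the half-plane $\psi<0$ in finite time, using the lower bound $|\dot\psi| \geq \theta_0^2$ that holds while $\psi>0$ (here $\theta$ is non-decreasing in that region). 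Once $\psi$ is strictly negative, $\dot\theta/\theta = 2\psi$ is bounded above by a strictly negative constant, which forces exponential decay of $|\theta|$ and hence $\int_0^\infty \theta(t)^2\,dt < \infty$. This integral bound in turn gives a finite limit $\psi(t)\to a < 0$, so the trajectory converges to $(a,0)$ rather than to the origin, proving non-convergence to $(0,0)$.

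Finally I would check Assumption 2 mechanically. At $\psi^* = 0,\,\theta^* = 0$ one computes
$$g(\theta) = \bigl\|\mathbb{E}_{\delta_0}[x^2] - \mathbb{E}_{\delta_\theta}[x^2]\bigr\|^2 = \theta^4, \qquad h(\psi) = \mathbb{E}_{\delta_0}[4\psi^2 x^2] = 0,$$
since at $\theta^*=0$ the penalty measure collapses to a Dirac mass at the origin. The Hessian of $g$ at $\theta^*=0$ is zero, so its null space is all of $\mathbb{R}$; but $g(\theta) = \theta^4$ is not constant in any neighbourhood of $0$, so Assumption 2 is violated.

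The main obstacle will be the monotonicity argument in the second step: because the linearization is degenerate I cannot invoke Hartman--Grobman, so I must argue by hand that every nearby trajectory with $\theta_0 \neq 0$ enters $\{\psi<0\}$ in finite time and that $\lim_{t\to\infty}\psi(t)$ exists and is strictly negative. The remaining pieces --- enumerating the equilibria on the $\psi$-axis and verifying the failure of Assumption 2 --- are routine once the ODE has been written down.
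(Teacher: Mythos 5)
Your proposal is correct, and it reaches the same ODE system and the same equilibrium set as the paper, but the route to non-convergence is genuinely different. The paper's proof is qualitative: it draws the nullclines $\psi=0$ and $\psi=-\tfrac{3}{4\rho}$, classifies solution curves into four types via the phase portrait (supported by a figure), and concludes that no curve reaches $(b,0)$ with $b\ge 0$. You instead give a quantitative monotonicity argument: $\dot\psi<0$ whenever $\theta\neq 0$ and $\psi>-\tfrac{3}{4\rho}$, a uniform lower bound $|\dot\psi|\ge\theta_0^2$ forces $\psi$ across zero in finite time, and once $\psi\le-\epsilon<0$ the exponential decay of $\theta$ yields $\int\theta^2\,dt<\infty$ and hence a finite limit $\psi(t)\to a<0$. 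This buys you a self-contained analytic proof that does not lean on a picture, at the cost of having to handle the remaining sign cases ($\psi_0\le 0$, $\theta_0<0$) explicitly --- these follow by the symmetry $\theta\mapsto-\theta$ and by skipping the first phase, and you should also note that the line $\psi=-\tfrac{3}{4\rho}$ is invariant, so trajectories starting above it never cross it and $\psi$ stays bounded, which is what makes the bound $|\dot\psi|\le C\theta^2$ legitimate. You also add a step the paper's proof omits entirely: the explicit verification that Assumption 2 fails, via $g(\theta)=\theta^4$ having a vanishing Hessian at $\theta^*=0$ while not being locally constant. That is a welcome completion of the lemma as stated, since the paper asserts the violation without computing it.
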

\begin{proof}[Proof of Lemma \autoref{diracGANnotconv}]
For the SGP $\mu$-WGAN optimization problem $(D_2,\delta_0,\delta_\theta,\mu_{GP})$, the dynamic system can be written as follows.
\begin{align*}
\dot{\psi} &= - \theta^2 - \frac{4}{3}\rho\psi\theta^2 \\
\dot{\theta} &= 2\psi\theta
\end{align*}
$2\psi\theta = 0$ and $\theta^2(1+\frac{4}{3}\rho\psi) = 0$ implies that $\theta = 0$, so the $\psi$-axis is the set of all equilibrium points. By drawing the nullclines $\psi = 0$ and $\psi = -\frac{3}{4\rho}$
in the $\psi\theta$-plane, it is clear that no solution curve converges to $(b,0)$ with $b\geq 0$, as shown in \autoref{fig:app12}.
\begin{figure}[htb]
\centering
  \includegraphics[width=0.8\linewidth]{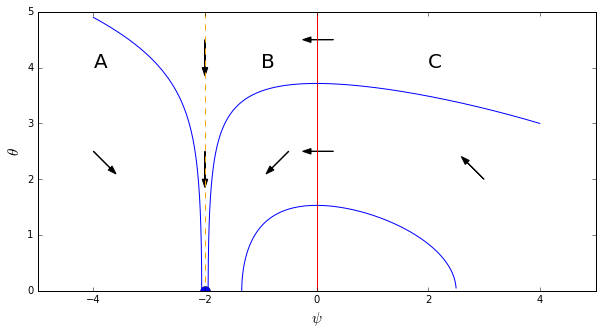}
  \caption[]{Phase portrait of the SGP $\mu$-WGAN optimization problem $(D_2,\delta_0,\delta_\theta,\mu_{GP})$ with $\rho=\frac{3}{8}$. Along the line $\theta = 0$, the system is stable so no updating will occur. Every solution curve that passes the nullcline $\psi = 0$ has $\dot{\theta}=0$. For the nullcline $\psi=-\frac{3}{4\rho}=-2$, no updating on $\psi$ will occur and only $\theta$ will be updated. Given that the solution curves do not intersect with each other, every solution curve is exactly one of the following, except for some trivial cases; (1) Solution curve stays in area A. (2) Solution curve converges to $(\psi,\theta) = (-\frac{3}{4\rho},0)$ along the nullcline $\psi=-\frac{3}{4\rho}$. (3) Solution curve stays in area B. (4) Solution curve starts from area C, crosses the nullcline $\psi=0$ perpendicularly, and converges to $(b,0)$ with $b<0$. Therefore, no solution curve converges to $(0,0)$.}
  \label{fig:app12}
\end{figure} 
\end{proof}

\newpage
\section*{Appendix C : Proof of the Main Convergence Theorem}
\begin{proof}
Let us consider the Jacobian matrix $J= \begin{bmatrix}
K_{DD} & K_{DG} \\
K_{GD} & K_{GG}
\end{bmatrix}$ at the first equilibrium $(\psi^*,\theta^*)$ \footnote{
In standard notation, $\nabla_\psi g$ is the $dim(\text{range of }g)\times dim(\psi)$ matrix. For a real-valued function $f$, we consider the first derivative as the column vector instead of the row vector. $\nabla_\psi f$ is considered to be the $dim(\psi)\times 1$ matrix(column vector) of the total derivative. For the second derivative, $\nabla_{\psi\theta}f = (\nabla_\psi)(\nabla_\theta f)$ is the $dim(\theta)\times dim(\psi)$ matrix. The transpose notation is used in a similar manner to the matrix.
}.
$$J = \begin{bmatrix}
\mathbb{E}_{p_d}[\nabla_{\psi\psi}D] - \mathbb{E}_{p_{\theta^*}}[\nabla_{\psi\psi}D] - \frac{\rho}{2}\nabla_{\psi\psi}\mathbb{E}_\mu[\norm{\nabla_x D}^2] & -\nabla_{\theta\psi}\mathbb{E}_{p_\theta}[D] - \frac{\rho}{2}\nabla_{\theta\psi}\mathbb{E}_\mu[\norm{\nabla_x D}^2]\\
\nabla_{\psi\theta}\mathbb{E}_{p_\theta}[D] & \nabla_{\theta\theta}\mathbb{E}_{p_\theta}[D]
\end{bmatrix}$$

First, Assumption 1 implies that $\mathbb{E}_{p_d}[\nabla_{\psi\psi}D] - \mathbb{E}_{p_{\theta^*}}[\nabla_{\psi\psi}D] = 0$ since $p_\theta \rightarrow p_d$ as $\theta \rightarrow \theta^*$. From Assumption 3, $\mathbb{E}_{p_\theta}[D(x;\psi^*)]$ is locally zero near the equilibrium $\theta^*$, which implies that 
$$K_{GG} = \nabla_{\theta\theta}\mathbb{E}_{p_\theta}[D(x;\psi^*)]\biggr|_{\theta = \theta^*} = 0$$

We still need to evaluate $\nabla_{\psi\psi}\mathbb{E}_\mu[\norm{\nabla_x D}^2]$ and $\nabla_{\theta\psi}\mathbb{E}_\mu [\norm{\nabla_x D}^2]$. According to Assumption 6a, finite signed measures $\mu'_{\psi,\theta}$ and $\mu''_{\psi,\theta}$ exist\footnote{$\mu'_{\psi,\theta}$ and $\mu''_{\psi,\theta}$ will be considered as row vector($1 \times dim(\psi)$ matrix) and $dim(\psi)\times dim(\psi)$ matrix of finite signed measures respectively. $\mu'_{\psi,\theta} = \begin{bmatrix}
\frac{\partial}{\partial \psi_1}\mu_{\psi,\theta} & ... & \frac{\partial}{\partial \psi_{dim(\psi)}}\mu_{\psi,\theta}
\end{bmatrix} $ and $\mu''_{\psi,\theta} = \begin{bmatrix}
\frac{\partial^2}{\partial \psi_{i}\partial \psi_{j}}\mu_{\psi,\theta}
\end{bmatrix}_{ij}$.}, so they are the first and second weak derivatives of $\mu_{\psi,\theta}$ with respect to the parameter $\psi$ at $(\psi^*,\theta^*)$. Therefore, the expectations given above can be rewritten as below.
\begin{align*}
I &= \nabla_{\psi\psi}\int_{supp(\mu_{\psi,\theta})} \norm{\nabla_x D}^2d\mu_{\psi,\theta}  \\
&= \int_{supp(\mu_{\psi,\theta})} (2\nabla^T_{\psi x}D\nabla_{\psi x}D + 2K_0)d\mu_{\psi,\theta} + \int_{supp(\mu_{\psi,\theta})} 2(\nabla^T_{\psi x}D\nabla_x D)d\mu'_{\psi,\theta} + \int_{supp(\mu_{\psi,\theta})} \norm{\nabla_x D}^2d\mu''_{\psi,\theta} \\
II &= \nabla_{\theta\psi}\int_{supp(\mu_{\psi,\theta})} \norm{\nabla_x D}^2d\mu_{\psi,\theta} \\
&= \nabla_\theta ( \int_{supp(\mu_{\psi,\theta})} 2(\nabla^T_{\psi x}D\nabla_x D)d\mu_{\psi,\theta} + \int_{supp(\mu_{\psi,\theta})} \norm{\nabla_x D}^2d\mu'_{\psi,\theta} ) \\
\end{align*}
where
$$K_0(x;\psi) = \begin{bmatrix}
\sum_k \frac{\partial^3}{\partial\psi_i \partial\psi_j \partial x_k}D(x;\psi)\frac{\partial}{\partial x_k}D(x;\psi)
\end{bmatrix}_{ij}$$

From Assumption 6c and the fact that the weak derivative of $\mu_{\psi,\theta}$ vanishes outside of $supp(\mu_{\psi,\theta})$, $\nabla_x D(x;\psi^*) = 0$ on $supp(\mu_{\psi,\theta})\subset V$ for all $\theta$ with $|\theta-\theta^*|<\epsilon_\mu$ and $\mu'_{\psi,\theta}=\mu''_{\psi,\theta}=0$ on the outside of $supp(\mu_{\psi,\theta})$, which leads to the desired results:
\begin{align*}
I &= \int_{supp(\mu^*)} 2(\nabla^T_{\psi x}D(x;\psi^*)\nabla_{\psi x}D(x;\psi^*))d\mu^* \\
II &= 0
\end{align*}

After cancelling the undesired terms, the Jacobian matrix at the equilibrium $(\psi^*,\theta^*)$ is given as:
$$J = \begin{bmatrix}
-\rho Q & -R \\
R^T & 0 
\end{bmatrix}
$$
where 
\begin{align*}
Q &= \mathbb{E}_{\mu^*}[\nabla^T_{\psi x}D\nabla_{\psi x}D] \\
R &= \nabla_\theta\mathbb{E}_{p_\theta}[\nabla_\psi D]\biggr|_{\theta = \theta^*}
\end{align*}
From the definition of $Q$, it is easy to check that $Q$ is at least positive semi-definite. It is known that for a negative definite matrix $A$ and full column rank matrix $B$, the block matrix $\begin{bmatrix}
A & B \\
-B^T & 0
\end{bmatrix}$ is Hurwitz, i.e., all eigenvalues of the matrix have a negative real part. Therefore, if $Q$ is positive definite and $R$ is full column rank, the proof is complete. We consider the complementary case.\\

Suppose that $Q$ or $R^TR$ have some zero eigenvalues. Let $Q = U_D\Lambda_DU_D^T$ and $R^TR = U_G\Lambda_GU_G^T$ with $U_D = \begin{bmatrix} T_D & S_D \end{bmatrix}$ and $U_G = \begin{bmatrix} T_G & S_G \end{bmatrix}$, where $T_D$ and $T_G$ are the eigenvectors of $Q$ and $R^TR$ that correspond to non-zero eigenvalues. First, we assume that $T_D$ and $T_G$ are not empty. We can show that $(\psi^*+ \xi v, \theta^* + \nu w)$ is also an equilibrium point for a sufficiently small $\xi,\nu$ and $v\in N(Q), w\in N(R^TR)$ by using the techniques given by \cite{STABILITY_GAN}. If the system does not update at the equilibrium point $(\psi^*,\theta^*)$ and its small neighborhood $(\psi^*+ \xi v, \theta^* + \nu w)$ is perturbed along $N(Q)$ and $N(R^TR)$, then it is reasonable to project the system orthogonal to $N(Q)$ and $N(R^TR)$.\\

First, we assume that $v\in N(Q)$. By Assumption 2, $h(\psi^* + \xi v) = h(\psi^*) = 0$ for $|\xi| < \xi_d$, which implies that $\nabla_x D(x;\psi^*+\xi v) = 0$ for $x\in supp(\mu_{\psi^*+\xi v,\theta^*}) = supp(\mu^*)$ and $|\xi| < \xi_d$. Thus, we obtain 
$$\mathbb{E}_{\mu_{\psi^*+\xi v, \theta^*}}[\nabla^T_{\psi x}D(x;\psi^*+\xi v)\nabla_x D(x;\psi^*+\xi v)] = 0$$ and 
\begin{align*}
&\int_{supp(\mu^*)} \norm{\nabla_x D(x;\psi^*+\xi v)}^2d\mu'_{\psi^*+\xi v, \theta^*} = 0
\end{align*}

By Assumption 4, $\mathbb{E}_{p_d}[\nabla_\psi D(x;\psi^*+\xi v)] - \mathbb{E}_{p_{\theta^*}}[\nabla_\psi D(x;\psi^*+\xi v)] = 0$ since $p_d = p_{\theta^*}$. By adding these equations, we obtain
\begin{align*}
\dot{\psi} &= \mathbb{E}_{p_d}[\nabla_\psi D(x;\psi^*+\xi v)] - \mathbb{E}_{p_{\theta^*}}[\nabla_\psi D(x;\psi^*+\xi v)] \\
&- \frac{\rho}{2}\int_{supp(\mu_{\psi^*+\xi v,\theta^*})} 2\nabla^T_{\psi x}D(x;\psi^*+\xi v)\nabla_x D(x;\psi^*+\xi v)d\mu_{\psi^*+\xi v, \theta^*} \\
&- \frac{\rho}{2}\int_{supp(\mu_{\psi^*+\xi v,\theta^*})} \norm{\nabla_x D(x;\psi^*+\xi v)}^2 d\mu'_{\psi^*+\xi v, \theta^*} \\
&= 0
\end{align*}
In addition, 
\begin{align*}
\dot{\theta} &= \frac{\partial}{\partial\theta}\int_{\mathcal{X}} D(x;\psi^*+\xi v) dp_\theta \biggr|_{\theta = \theta^*} \\
&= \int_{\mathcal{Z}} \nabla^T_\theta G(z;\theta^*)\nabla_x D(G(z;\theta^*);\psi^*+\xi v)p_{latent}(z)dz = 0 .
\end{align*}
Therefore, the point $(\psi^*+\xi v,\theta^*)$ with $|\xi| < \xi_d$ is an equilibrium point. According to Assumption 4, $D(x;\psi^*+\xi v)$ is an equilibrium discriminator for $|\xi|<\delta_d$, and thus $D(x;\psi^*+\xi v)$ is already an optimal discriminator for $|\xi| < \min(\xi_d,\delta_d)$.\\

Suppose that $w \in N(R^TR)$. By Assumption 2, $g(\theta^*) = g(\theta^*+\nu w) = 0$ for $|\nu|<\nu_g$, and thus $\mathbb{E}_{p_d}[\nabla_\psi D(x;\psi^*)] - \mathbb{E}_{p_{\theta^*+\nu w}}[\nabla_\psi D(x;\psi^*)] = 0$ for $|\nu|<\nu_g$. Furthermore, Assumption 3 gives $\mathbb{E}_{p_{\theta^*+\nu w}}[D(x;\psi^*)] = 0$ for a sufficiently close $|\nu|<\epsilon_g$, which implies that $\dot{\theta} = \nabla_\theta\mathbb{E}_{p_\theta}[D(x;\psi^*)]\biggr|_{\theta=\theta^*+\nu w} = 0$ for $|\nu|<\epsilon_g$. Finally,  
\begin{align*}
&\int_{supp(\mu_{\psi^*,\theta^*+\nu w})} 2\nabla^T_{\psi x}D(x;\psi^*)\nabla_x D(x;\psi^*)d\mu_{\psi^*, \theta^*+\nu w} + \int_{supp(\mu_{\psi^*,\theta^*+\nu w})} \norm{\nabla_x D(x;\psi^*)}^2 d\mu'_{\psi^*, \theta^*+\nu w} = 0
\end{align*}
since $supp(\mu_{\psi^*,\theta^*+\nu w}) \subset V$ and $\nabla_x D(x;\psi^*) = 0$ on $V$ for a sufficiently small $|\nu|<\epsilon_\mu$ (Assumption 6c). By adding these results, we obtain
\begin{align*}
\dot{\psi} &= \mathbb{E}_{p_d}[\nabla_\psi D(x;\psi^*)] - \mathbb{E}_{p_{\theta^*+\nu w}}[\nabla_\psi D(x;\psi^*)] \\
&- \frac{\rho}{2}\int_{supp(\mu_{\psi^*,\theta^*+\nu w})} 2\nabla^T_{\psi x}D(x;\psi^*)\nabla_x D(x;\psi^*)d\mu_{\psi^*, \theta^*+\nu w} \\
&- \frac{\rho}{2}\int_{supp(\mu_{\psi^*,\theta^*+\nu w})} \norm{\nabla_x D(x;\psi^*)}^2 d\mu'_{\psi^*, \theta^*+\nu w} \\
&= 0
\end{align*}
Therefore, the point $(\psi^*,\theta^*+\nu w)$ with $|\nu| < \min(\epsilon_\mu,\epsilon_g,\nu_g,\delta_g)$ is an equilibrium point, which implies that $p_{\theta^*+\nu w}=p_d$ according to Assumption 4. \\

If we consider the projected system $(\alpha,\beta) = (T_D^T\psi, T_G^T\theta)$, then the projected dynamic system's Jacobian at $(T_D^T \psi^*, T_G^T\theta^*)$ is given as follows.
$$J' = \begin{bmatrix}
-\rho T_D^TQT_D & -T_D^TRT_G \\
T_G^TR^TT_D & 0 
\end{bmatrix} = \begin{bmatrix}
-\rho \Lambda_D^{(+)} & -T_D^TRT_G \\
T_G^TR^TT_D & 0 
\end{bmatrix}$$
Therefore, we only need to prove that $T_D^TRT_G$ is of full column rank. Suppose that $u\in N(Q^T) = N(Q)$. According to Assumption 2, $h(\psi)$ is locally constant at $\psi^*$ along the direction $u$. Therefore, for a sufficiently small scalar $\xi$ with $|\xi|<\xi_u$,
$$h(\psi^* + \xi u) = h(\psi^*) = 0$$
where the last equality comes from the Assumption 6. This implies that $\nabla_x D(x;\psi^*+\xi u) = 0$ on $x\in supp(\mu^*)$ for a small value of $|\xi|<\epsilon_u$. By taking directional derivative w.r.t. $\psi$ along the direction $u$, we obtain:
$$u^T\nabla^T_{\psi x}D(x;\psi^*) = 0, x\in supp(\mu_{\psi^*+\xi u,\theta^*}) = supp(\mu^*)$$
and thus
$$u^T\nabla^T_{\psi x}D(x;\psi^*) = u^T\nabla_{x\psi}D(x;\psi^*) = 0, x\in supp(p_{\theta^*}) = supp(p_d)$$
according to Assumption 6b (the inclusion condition that $supp(p_d) = supp(p_{\theta^*}) \subset supp(\mu^*)$ is required). By calculating $u^TR$ directly, we obtain
\begin{align*}
u^TR &= u^T\frac{\partial}{\partial\theta}\int_{\mathcal{X}} \nabla_\psi D(x;\psi^*) dp_\theta \biggr|_{\theta = \theta^*} \\
&= u^T\frac{\partial}{\partial\theta}\int_{\mathcal{X}} \nabla_\psi D(G(z;\theta);\psi^*) p_{latent}(z)dz\biggr|_{\theta = \theta^*} \\
&= \int_{\mathcal{X}} u^T\nabla_{x \psi}D(G(z;\theta^*);\psi^*)\nabla_\theta G(z;\theta^*)p_{latent}(z)dz = 0
\end{align*}
Thus, we obtain $u\in N(R^T)$, which implies that $N(Q^T) \subset N(R^T)$ and $C(R) \subset C(Q)$. Now, we can check that $RT_G$ is of full column rank since $T_G^TR^TRT_G = \Lambda_G^{(+)}$ is positive definite. Therefore,
$$RT_Gw = 0 \Rightarrow w = 0$$
We note that the projection matrix on $C(Q)$ is given by $T_D(T_D^TT_D)^{-1}T_D^T = T_DT_D^T$. In addition, we know that $C(RT_G) \subset C(R) \subset C(Q)$. Therefore,
\begin{align*}
&T_D^TRT_Gw = 0 \\
\Rightarrow & T_DT_D^TRT_Gw = 0 \\
\Rightarrow & T_DT_D^Tw' = 0, w'=RT_Gw \in C(RT_G) \\
\Rightarrow & \text{Projection of }w'\text{ onto }C(Q)\text{ is zero, where }w'\in C(RT_G)\subset C(Q) \\
\Rightarrow & w' = RT_Gw = 0 \\
\Rightarrow & w = 0
\end{align*}
which completes the proof that $T_D^TRT_G$ is a full column rank matrix.\\

Now, we only need to obtain proofs for the trivial cases where either one of $T_D$ or $T_G$ is empty. First, suppose that $T_G$ is empty. Similar to the analysis given above, we can find that the point $(\psi^*,\theta)$ with $|\theta-\theta^*| < \min(\epsilon_\mu,\epsilon_g,\delta_g,\nu)$ is an equilibrium point, where $g(\theta^*)=g(\theta)$ for a sufficiently small $|\theta-\theta^*| < \nu$. We conclude that $p_\theta = p_d$ for $|\theta-\theta^*|<\min(\epsilon_\mu,\epsilon_g,\delta_g,\nu)$. Under the generator initialization that is sufficiently close according to $\theta^*$, we can only observe the discriminator update
$$\dot{\psi} = -\frac{\rho}{2} \nabla_\psi \mathbb{E}_{\mu_{\psi,\theta}}[\norm{\nabla_x D(x;\psi)}^2]$$
since $\mathbb{E}_{p_d}[D(x;\psi)] - \mathbb{E}_{p_\theta}[D(x;\psi)] = 0$ for any $\psi$ and $|\theta - \theta^*| <\min(\epsilon_\mu,\epsilon_g,\delta_g,\nu) $. The discriminator update described above is locally stable system near the equilibrium $\psi = \psi^*$ since the Jacobian of the update on $\psi$ is given as $-\rho Q$ and the zero eigenvalues can be ignored in a similar manner to the previous step. Therefore, the given system is stable near the equilibrium. \\

Suppose that $T_D$ is empty. Given that $N(Q^T) \subset N(R^T)$, $R = 0$, then the results are similar to those presented above, but our goal is to show that $(\psi,\theta)$ is an equilibrium point, where $(\psi,\theta)$ is sufficiently close to the original equilibrium point. We note that $(\psi^*,\theta)$ is also an equilibrium point that satisfies the assumptions.

By Assumption 2, $h(\psi) = h(\psi^*) = 0$ for $|\psi - \psi^*| < \xi$, which implies that $\nabla_x D(x;\psi) = 0$ for $x\in supp(\mu_{\psi,\theta^*}) = supp(\mu^*)$ and $|\psi - \psi^*| < \xi$. Thus, we obtain
\begin{align*}
&\mathbb{E}_{\mu_{\psi,\theta^*}}[\nabla^T_{\psi x}D(x;\psi)\nabla_x D(x;\psi)] = 0 \\
&\frac{\rho}{2}\int_{supp(\mu^*)} \norm{\nabla_x D}^2d\mu'_{\psi,\theta^*}dx = 0
\end{align*}
By Assumption 4, $\mathbb{E}_{p_d}[\nabla_\psi D(x;\psi)] - \mathbb{E}_{p_{\theta^*}}[\nabla_\psi D(x;\psi)] = 0$ since $p_d = p_{\theta^*}$. In addition, 
\begin{align*}
\dot{\theta} &= \frac{\partial}{\partial\theta}\int_{\mathcal{X}} D(x;\psi) dp_\theta \biggr|_{\theta = \theta^*} = \int_{\mathcal{Z}} \nabla^T_\theta G(z;\theta^*)\nabla_x D(G(z;\theta^*);\psi)p_{latent}(z)dz = 0
\end{align*}
Therefore, the point $(\psi,\theta^*)$ with $|\psi - \psi^*| < \min(\xi,\delta_d)$ is an equilibrium point. From Assumption 4, $D(x;\psi)$ is an equilibrium discriminator, and thus $D(x;\psi)$ is already an optimal discriminator for $|\psi-\psi^*| < \min(\xi,\delta_d)$ and $p_\theta$ coincides with the data distribution $p_d$ for $|\theta - \theta^*| < \min(\epsilon_\mu,\epsilon_g,\delta_g)$, which indicates that every discriminator and generator near $(\psi^*,\theta^*)$ is an equilibrium point and this completes the proof of the main theorem.
\end{proof}

\newpage
\section*{Appendix D : Detailed Experimental Results}
\begin{figure}[htb]
    \centering 
    \begin{subfigure}{0.185\textwidth}
      \includegraphics[width=\linewidth]{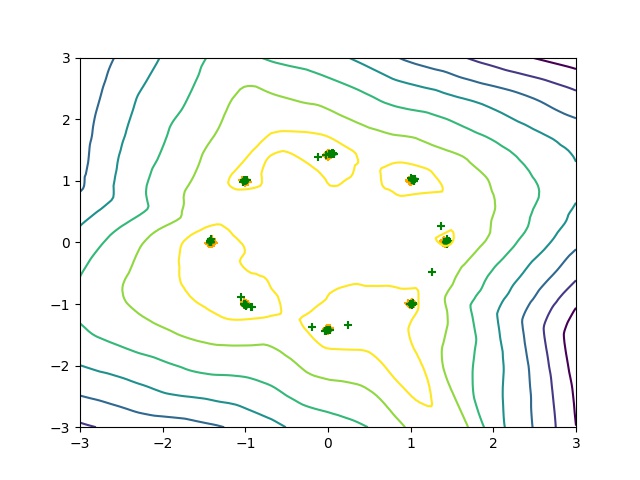}
      \label{fig:1}
    \end{subfigure} 
    \begin{subfigure}{0.185\textwidth}
      \includegraphics[width=\linewidth]{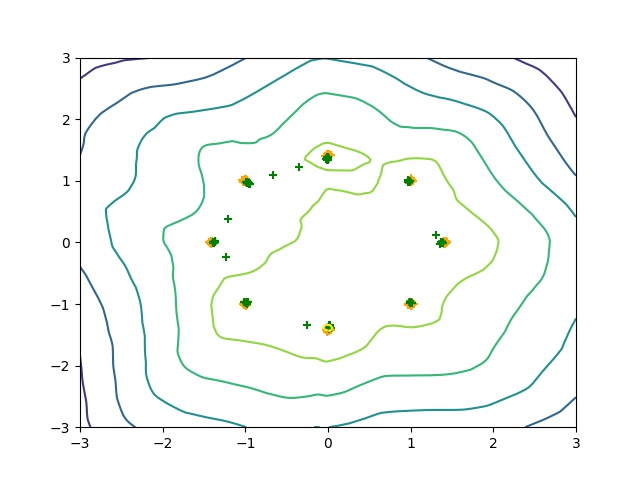}
      \label{fig:2}
    \end{subfigure} 
    \begin{subfigure}{0.185\textwidth}
      \includegraphics[width=\linewidth]{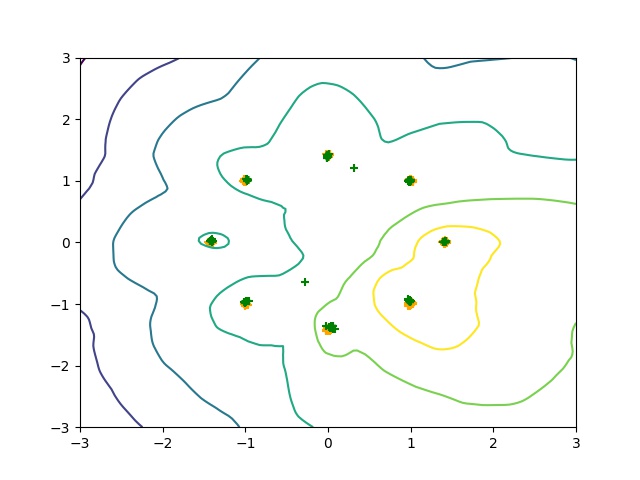}
      \label{fig:3}
    \end{subfigure}
    \begin{subfigure}{0.185\textwidth}
      \includegraphics[width=\linewidth]{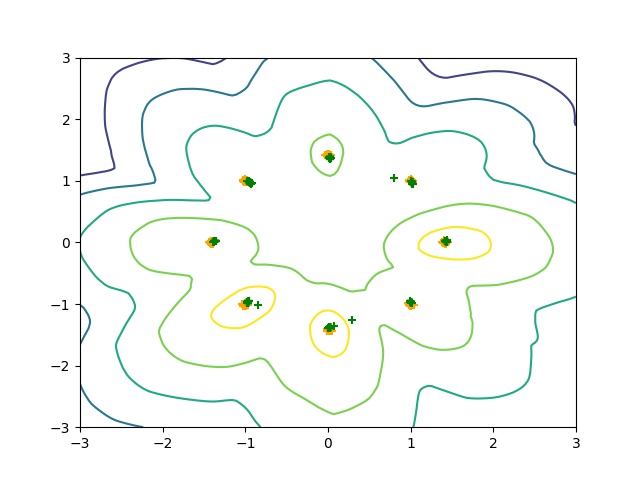}
      \label{fig:4}
    \end{subfigure}
    \begin{subfigure}{0.185\textwidth}
      \includegraphics[width=\linewidth]{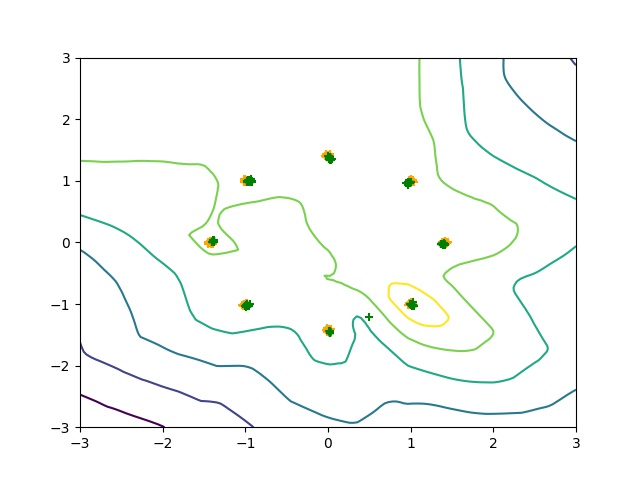}
      \label{fig:5}
    \end{subfigure}
    
    \medskip
    \begin{subfigure}{0.185\textwidth}
      \includegraphics[width=\linewidth]{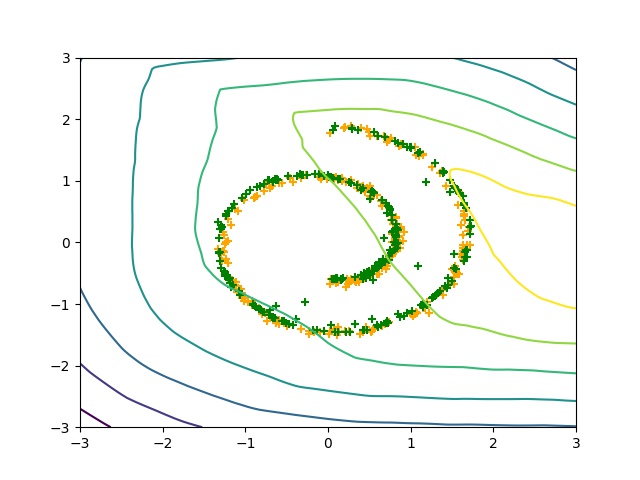}
      \label{fig:6}
    \end{subfigure} 
    \begin{subfigure}{0.185\textwidth}
      \includegraphics[width=\linewidth]{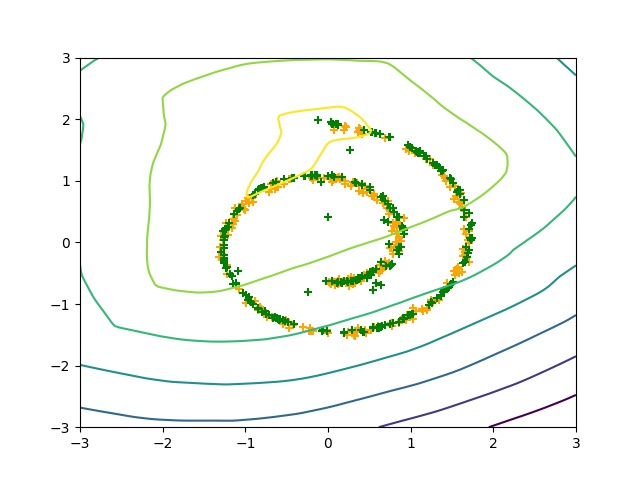}
      \label{fig:7}
    \end{subfigure} 
    \begin{subfigure}{0.185\textwidth}
      \includegraphics[width=\linewidth]{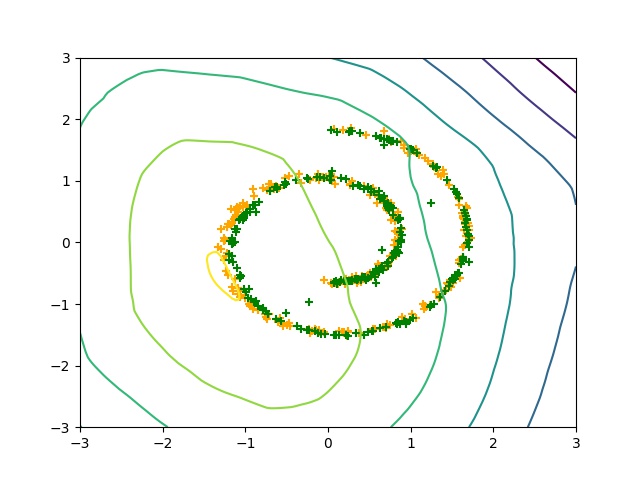}
      \label{fig:8}
    \end{subfigure}
    \begin{subfigure}{0.185\textwidth}
      \includegraphics[width=\linewidth]{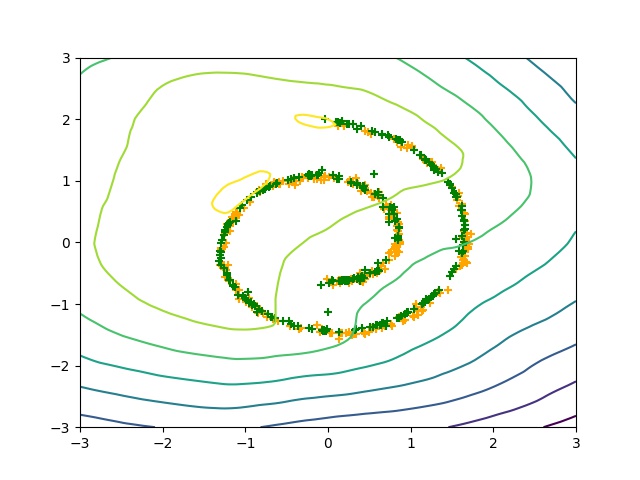}
      \label{fig:9}
    \end{subfigure}
    \begin{subfigure}{0.185\textwidth}
      \includegraphics[width=\linewidth]{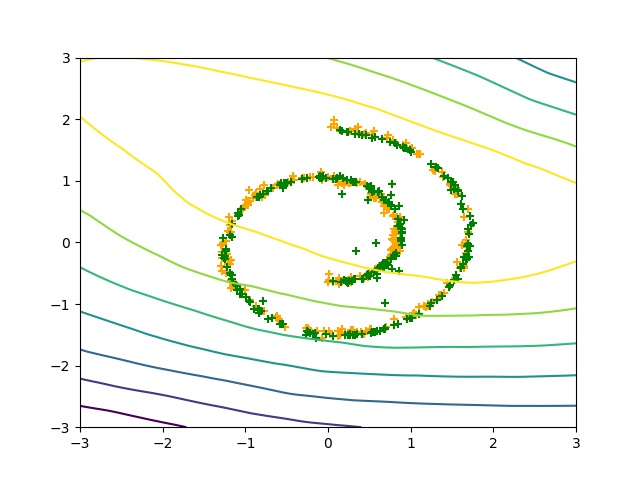}
      \label{fig:10}
    \end{subfigure}
    
    \medskip
    \begin{subfigure}{0.185\textwidth}
      \includegraphics[width=\linewidth]{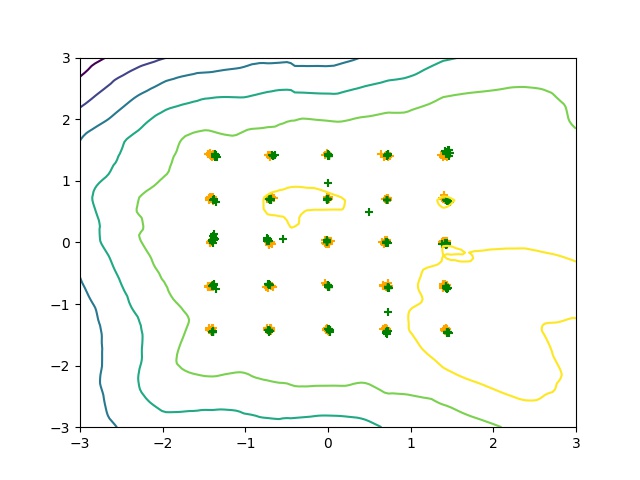}
      \label{fig:11}
    \end{subfigure} 
    \begin{subfigure}{0.185\textwidth}
      \includegraphics[width=\linewidth]{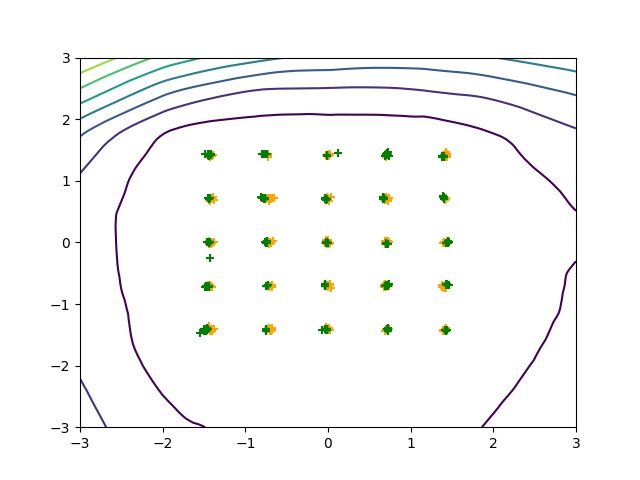}
      \label{fig:12}
    \end{subfigure} 
    \begin{subfigure}{0.185\textwidth}
      \includegraphics[width=\linewidth]{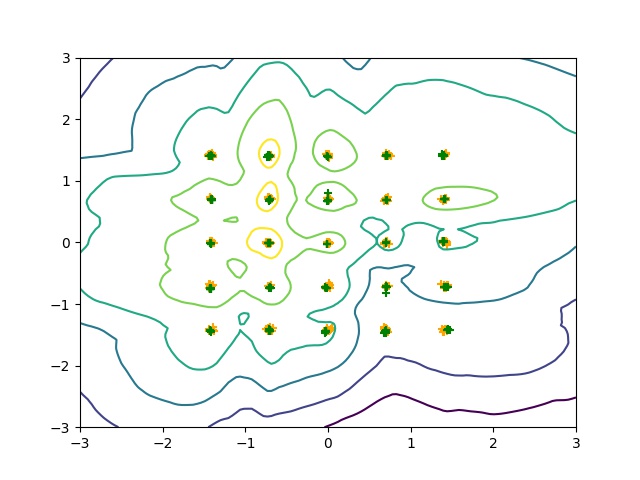}
      \label{fig:13}
    \end{subfigure}
    \begin{subfigure}{0.185\textwidth}
      \includegraphics[width=\linewidth]{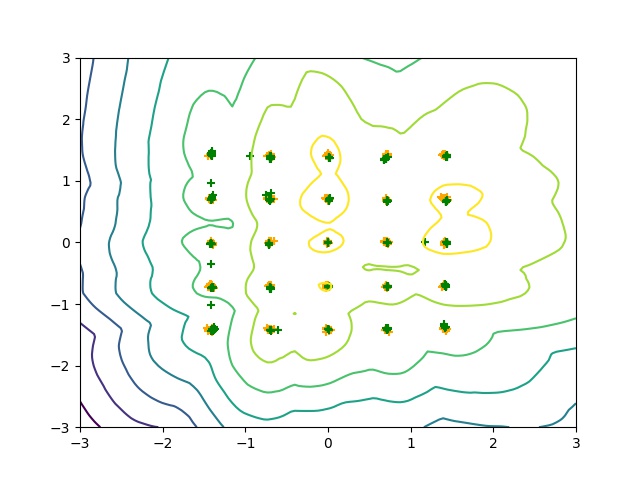}
      \label{fig:14}
    \end{subfigure}
    \begin{subfigure}{0.185\textwidth}
      \includegraphics[width=\linewidth]{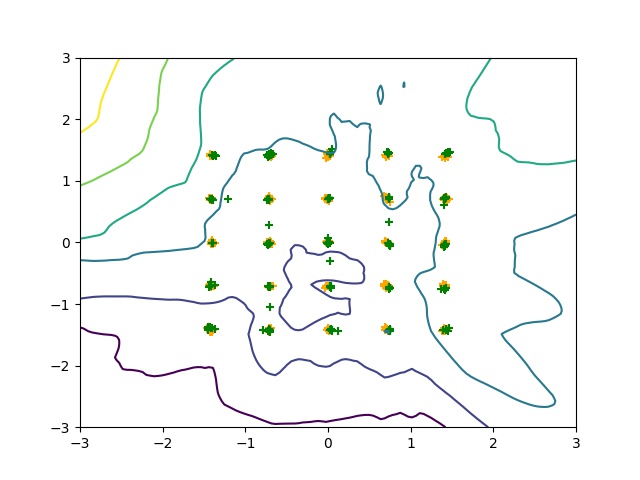}
      \label{fig:15}
    \end{subfigure}
    \caption{2D example on 8 Gaussians, swissroll, 25 Gaussians datasets. Images generated with 5 penalty measures: $\mu_{GP},\mu_{mid},p_g,p_d,\mu_{g,anc}$.}
    \label{fig:images2d}
\end{figure}
\newpage
\begin{figure}[htb]
    \centering 
    \begin{subfigure}[t]{0.45\textwidth}
      \includegraphics[width=\linewidth]{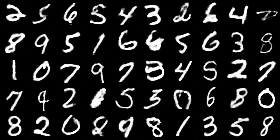}
      \label{fig:733}
      \vspace{-0.8\baselineskip}
      \subcaption{$p_g$}
    \end{subfigure} 
    \begin{subfigure}[t]{0.45\textwidth}
      \includegraphics[width=\linewidth]{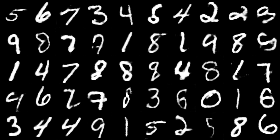}
      \label{fig:734}
      \vspace{-0.8\baselineskip}
      \subcaption{$p_d$}
    \end{subfigure}     
    \medskip
    \begin{subfigure}[t]{0.45\textwidth}
      \includegraphics[width=\linewidth]{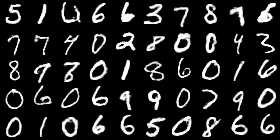}
      \label{fig:735}
      \vspace{-0.8\baselineskip}
      \subcaption{$\mu_{GP}$ with simple GP}
    \end{subfigure} 
    \begin{subfigure}[t]{0.45\textwidth}
      \includegraphics[width=\linewidth]{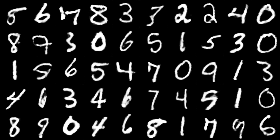}
      \label{fig:736}
      \vspace{-0.8\baselineskip}
      \subcaption{$\mu_{mid}$}
    \end{subfigure}
    \begin{subfigure}[t]{0.45\textwidth}
      \includegraphics[width=\linewidth]{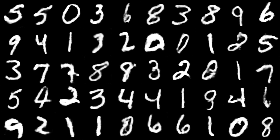}
      \label{fig:737}
      \vspace{-0.8\baselineskip}
      \subcaption{$\mu_{g.anc}$}
    \end{subfigure} 
    \caption{MNIST example. Images generated with $\mu_{GP},\mu_{mid},p_g,p_d,\mu_{g,anc}$.}
\end{figure}
\newpage
\begin{figure}[ht]
    \begin{subfigure}[t]{0.48\textwidth}
      \includegraphics[width=\linewidth]{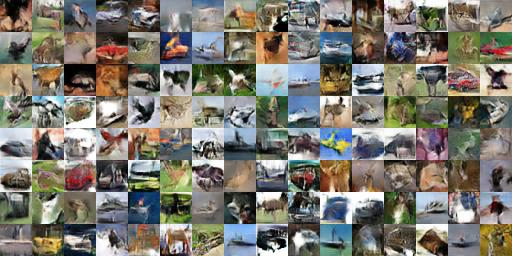}
      \label{fig:741}
      \vspace{-0.8\baselineskip}
      \subcaption{WGAN}
    \end{subfigure}
    \begin{subfigure}[t]{0.48\textwidth}
      \includegraphics[width=\linewidth]{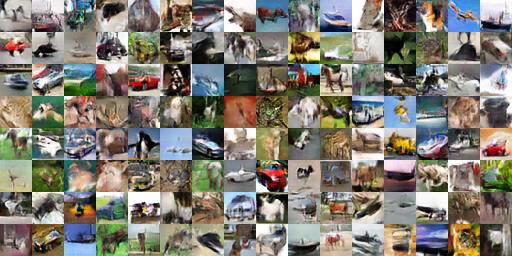}
      \label{fig:742}
      \vspace{-0.8\baselineskip}
      \subcaption{WGAN-GP}
    \end{subfigure}
    \newline
    \vspace{0.3cm}    
    
    \begin{subfigure}[t]{0.48\textwidth}
      \includegraphics[width=\linewidth]{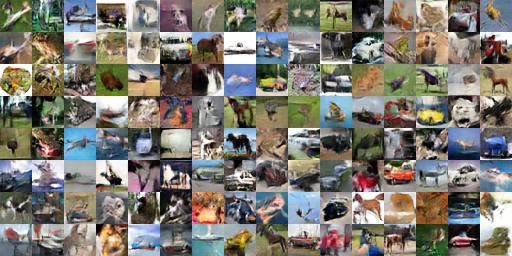}
      \label{fig:743}
      \vspace{-0.8\baselineskip}
      \subcaption{$p_g$}
    \end{subfigure}   
    \begin{subfigure}[t]{0.48\textwidth}
      \includegraphics[width=\linewidth]{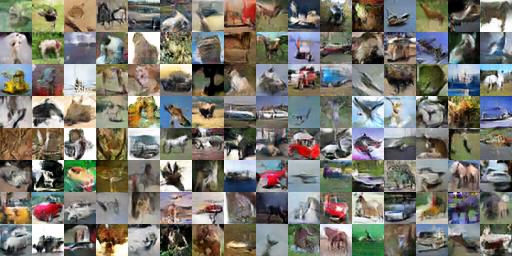}
      \label{fig:744}
      \vspace{-0.8\baselineskip}
      \subcaption{$p_d$}
    \end{subfigure}
    \newline
    \vspace{0.3cm}
    
    \begin{subfigure}[t]{0.48\textwidth}
      \includegraphics[width=\linewidth]{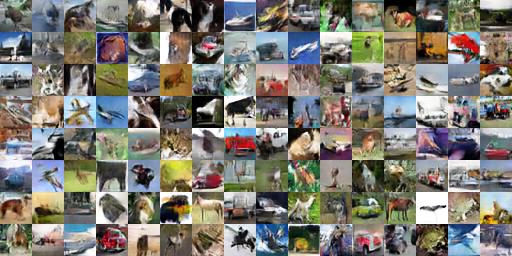}
      \label{fig:745}
      \vspace{-0.8\baselineskip}
      \subcaption{$\mu_{GP}$ with simple GP}
    \end{subfigure} 
    \newline
    \vspace{0.3cm}
    
    \begin{subfigure}[t]{0.48\textwidth}
      \includegraphics[width=\linewidth]{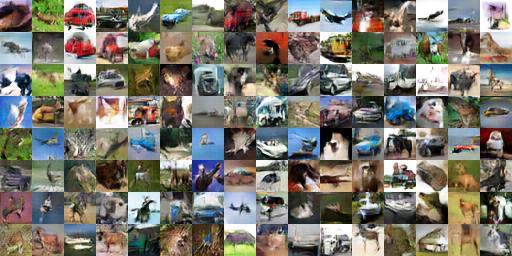}
      \label{fig:746}
      \vspace{-0.8\baselineskip}
      \subcaption{$\mu_{mid}$}
    \end{subfigure}
    \begin{subfigure}[t]{0.48\textwidth}
      \includegraphics[width=\linewidth]{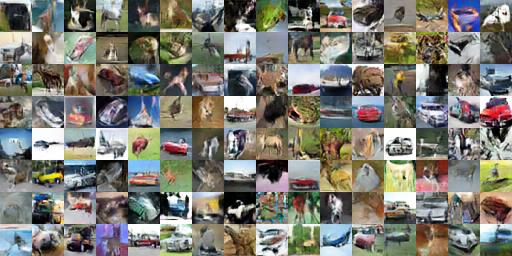}
      \label{fig:747}
      \vspace{-0.8\baselineskip}
      \subcaption{$\mu_{g.anc}$}
    \end{subfigure} 
    \caption{CIFAR-10 example. Images generated with WGAN, WGAN-GP, $\mu_{GP},\mu_{mid},p_g,p_d,\mu_{g,anc}$ under the DCGAN architecture.}
\end{figure}
\newpage
\begin{figure}[htb]
    \begin{subfigure}[t]{0.315\textwidth}
      \includegraphics[width=\linewidth]{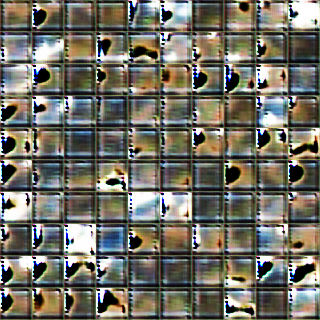}
      \label{fig:751}
      \vspace{-0.8\baselineskip}
      \subcaption{WGAN}
    \end{subfigure}
    \begin{subfigure}[t]{0.315\textwidth}
      \includegraphics[width=\linewidth]{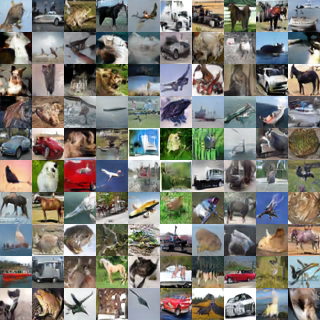}
      \label{fig:752}
      \vspace{-0.8\baselineskip}
      \subcaption{WGAN-GP}
    \end{subfigure}
    \newline
    \vspace{0.3cm}
    
    \begin{subfigure}[t]{0.315\textwidth}
      \includegraphics[width=\linewidth]{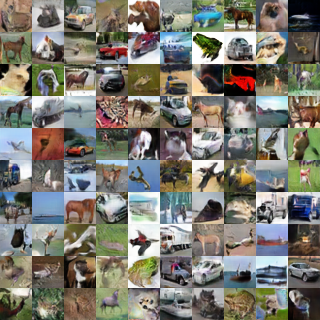}
      \label{fig:753}
      \vspace{-0.8\baselineskip}
      \subcaption{$p_g$}
    \end{subfigure} 
    \begin{subfigure}[t]{0.315\textwidth}
      \includegraphics[width=\linewidth]{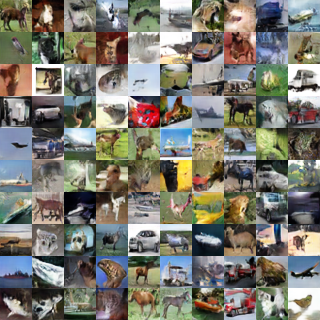}
      \label{fig:754}
      \vspace{-0.8\baselineskip}
      \subcaption{$p_d$}
    \end{subfigure}   
    \begin{subfigure}[t]{0.315\textwidth}
      \includegraphics[width=\linewidth]{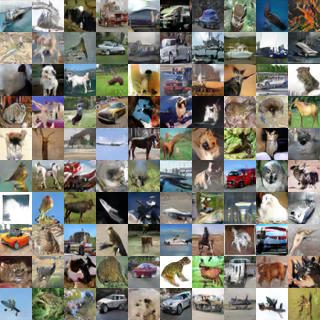}
      \label{fig:755}
      \vspace{-0.8\baselineskip}
      \subcaption{$\mu_{GP}$ with simple GP}
    \end{subfigure}
    \newline
    \vspace{0.3cm}
    
    \begin{subfigure}[t]{0.315\textwidth}
      \includegraphics[width=\linewidth]{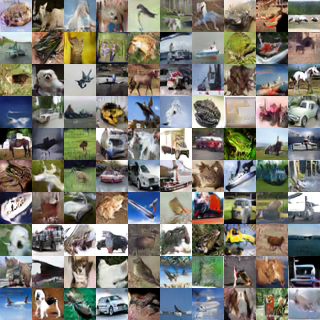}
      \label{fig:756}
      \vspace{-0.8\baselineskip}
      \subcaption{$\mu_{mid}$}
    \end{subfigure}
    \begin{subfigure}[t]{0.315\textwidth}
      \includegraphics[width=\linewidth]{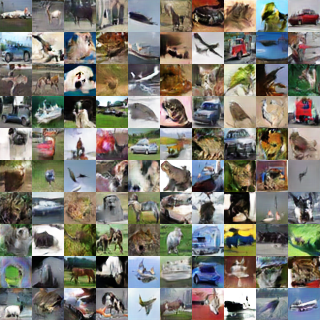}
      \label{fig:757}
      \vspace{-0.8\baselineskip}
      \subcaption{$\mu_{g.anc}$}
    \end{subfigure} 
    \caption{CIFAR-10 example. Images generated with WGAN, WGAN-GP, $\mu_{GP},\mu_{mid},p_g,p_d,\mu_{g,anc}$ under the ResNet architecture.}
\end{figure}

\end{document}